\DeclareMathOperator{\E}{\mathbb{E}}
\def \cD {\mathcal{D}}
\def \cX {\mathcal{X}}
\def \cY {\mathcal{Y}}
\newcommand\shortsection[1]{\vspace{6pt}{\noindent\bf #1.}}
\newcommand\blfootnote[1]{%
  \begingroup
  \renewcommand\thefootnote{}\footnote{#1}%
  \addtocounter{footnote}{-1}%
  \endgroup
}
\def\mytitle{Understanding the Generalization Benefit of Model Invariance from a Data Perspective}
\title{\mytitle}
\author{%
  Sicheng Zhu*, Bang An*, Furong Huang \\
  Department of Computer Science\\
  University of Maryland, College Park\\
  \texttt{\{sczhu, bangan, furongh\}@umd.edu} \\
}
\begin{document}

\maketitle

\begin{abstract}
Machine learning models that are developed with invariance to certain types of data transformations have demonstrated superior generalization performance in practice. 
However, the underlying mechanism that explains why invariance leads to better generalization is not well-understood, limiting our ability to select appropriate data transformations for a given dataset.
This paper studies the generalization benefit of model invariance by introducing the \emph{sample cover induced by transformations}, i.e., a representative subset of a dataset that can approximately recover the whole dataset using transformations.
Based on this notion, we refine the generalization bound for invariant models and characterize the suitability of a set of data transformations by the \emph{sample covering number induced by transformations}, i.e., the smallest size of its induced sample covers. 
We show that the generalization bound can be tightened for suitable transformations that have a small sample covering number.
Moreover, our proposed sample covering number can be empirically evaluated, providing a practical guide for selecting transformations to develop model invariance for better generalization.
We evaluate the sample covering numbers for commonly used transformations on multiple datasets and demonstrate that the smaller sample covering number for a set of transformations indicates a smaller gap between the test and training error for invariant models, thus validating our propositions.
\end{abstract}
\section{Introduction}
Invariance is ubiquitous in many real-world problems.\blfootnote{* Equal contribution.}
For instance, categorical classification of visual objects is invariant to slight viewpoint changes \cite{Edelman_1995,Anselmi_Leibo_Rosasco_Mutch_Tacchetti_Poggio_2014,Han_Roig_Geiger_Poggio_2020}, text understanding is invariant to synonymous substitution and minor typos \cite{Zhang_Zhao_LeCun_2015,Pruthi_Dhingra_Lipton_2019,Jones_Jia_Raghunathan_Liang_2020}. 
Intuitively, models capturing the underlying invariance exhibit improved generalization in practice \citep{Gens_Domingos_2014,cohen_group_2016,Zaheer_Kottur_Ravanbakhsh_2017,Cohen_Geiger_Weiler_2019,Weiler_Geiger_Welling_2018,Cohen_Geiger_Weiler_2019}.
Such generalization benefit is especially crucial when the data are scarce as in some medical tasks \cite{Winkels_Cohen_2018}, or when the task requires more data than usual as in cases of distribution shift \cite{Sagawa_Koh_Hashimoto_Liang_2020} and adversarial attack \citep{schmidt_adversarially_2018,Yin_Kannan_Bartlett_2019,awasthi_adversarial_2020}.

A commonly accepted intuition attributes the generalization benefit of model invariance to the reduced model complexity, especially the reduced sensitivity to spurious features.
However, a principled understanding of why model invariance
helps generalization remains elusive, thus leaving many open questions. 
Since model invariance may come at a cost (e.g., compromised accuracy, increase computational overhead), given a task, how should we choose among various data transformations under which model invariance guarantees better generalization? 
If existing data transformations are not good enough for a given task, what is the guiding principle to find new ones? 
The lack of a principled understanding limits better exploitation of model invariance to further improve generalization.
In addition, since identifying instructive generalization bound is a central topic in machine learning, we may expect to tighten existing generalization bounds by additionally considering the data-dependent model invariance property.

The many faces of data transformations and model classes pose significant challenges to a principled understanding of model invariance's generalization benefit.
To address this, \citep{Anselmi_Leibo_Rosasco_Mutch_Tacchetti_Poggio_2014,anselmi_invariance_2015,Mroueh_Voinea_Poggio_2015,sokolic_generalization_2017,sannai_improved_2020} characterize the input space and show that certain data transformations equivalently shrink the input space for invariant models, which then simplify the input and improves generalization.
From another perspective, \citep{elesedy_provably_2021,lyle_benefits_2020} directly characterize the function space and show that the volume of the invariant model class is reduced, which then simplifies the learning problem and improves generalization.
These understandings provide valuable insights, yet they may become less informative on high-dimensional input data 
or require model invariance to be obtained exclusively via feature averaging. Some certain assumptions on data transformations (e.g., finiteness, group structure with certain measures) also make these understandings less applicable to more general data transformations.

In this paper, we derive generalization bounds for invariant models based on the sample cover induced by data transformations and empirically show that the introduced notion can guide the data transformation selection.
Different from previous understandings, we first identify a data-dependent property of data transformations in a model-agnostic way, and then establish its connections with the refined generalization bounds of invariant models. 
The analysis applies to more general data transformations regardless of how model invariance is obtained and naturally provides model-agnostic guidance for data transformation selection.
We summarize our contributions as follows.

\shortsection{Contributions}
\begin{figure}
     \centering
    \subfigure[$\cG=\emptyset$]{\includegraphics[width=0.49\textwidth]{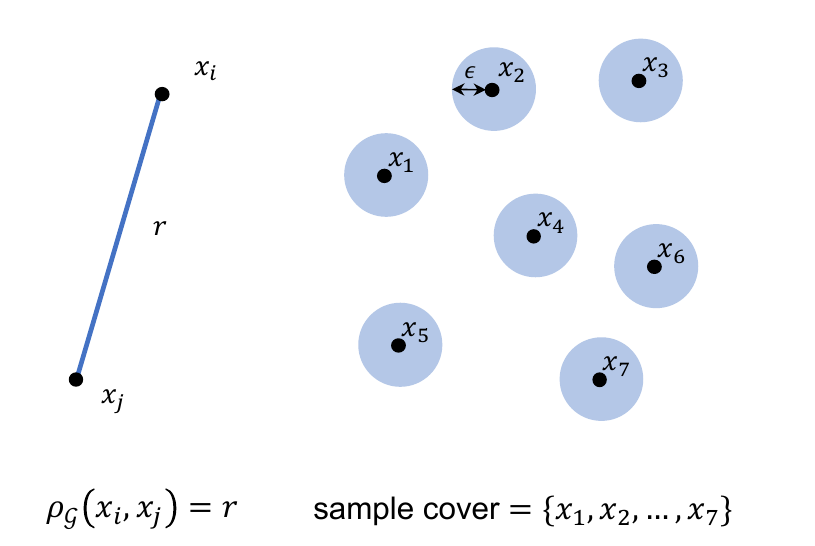}}
    \subfigure[$\cG=\{\text{"rotation"}\}$]{\includegraphics[width=0.49\textwidth]{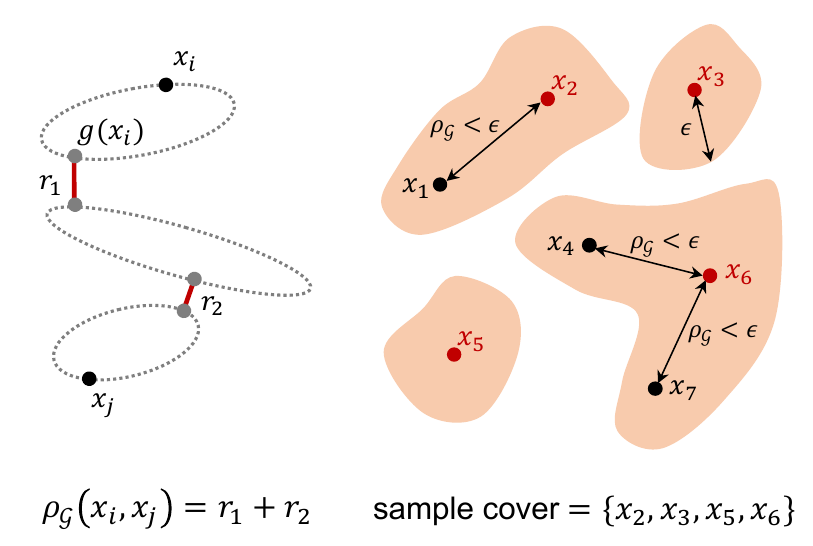}}
    \hfill
    \caption{Illustration of the pseudometric and sample cover induced by data transformations.}
    \label{fig:diagram}
\end{figure}
At the core of our understanding is the notion of sample cover induced by data transformations, defined informally as a representative subset of a dataset that can approximately recover the whole dataset using data transformations (illustrated in Figure~\ref{fig:diagram}).
We show that this notion identifies a data-dependent property of data transformations which is related to the generalization benefit of the corresponding invariant models.
Under a special setting of the sample cover, we first bound the model complexity of any invariant and output-bounded model class in terms of the sample covering numbers. 
Since this general bound requires a restrictive condition on data transformations in order to be informative, we then assume the model Lipschitzness to relax the requirement and refine the model complexity bound for invariant models.
Finally, we outline a framework for model-invariance-sensitive generalization bounds based on the invariant models' complexities, and use it to discuss the generalization benefit of model invariance.

Given the usefulness of sample cover in the analysis, we propose an algorithm to empirically estimate the sample cover. This algorithm exactly verifies whether a given subset of a sample forms a valid sample cover, and always estimates a sample covering number that upper-bounds the ground truth.
Inspired by our analysis, we also propose to use the sample covering number as a suitability measurement for practical data transformation selections.
This measurement is data-driven, widely applicable, and empirically correlates with invariant models' actual generalization performance.
We discuss its limitations and empirical mitigation.

To empirically verify our propositions, we first estimate the sample covering number for some commonly used data transformations on four image datasets, including CIFAR-10 and ShapeNet (a 3D dataset).
Under typical settings, the 3D-view transformation induces a much smaller sample covering number than others on ShapeNet, while cropping induces the smallest sample covering number on others datasets.
For those data transformations, we then train invariant models via data augmentation and invariance loss regularization to evaluate the actual generalization benefit.
Results show a clear correlation between smaller sample covering numbers induced by data transformations and the better generalization benefit enjoyed by invariant models.

\section{Preliminaries}
\vspace{-1em}
\shortsection{Data transformations}
We refer to the data transformation as a function from the input space $\cX\rightarrow\cX$, and data transformation\emph{s} as a set of such functions.
Unless otherwise specified, we do not assume data transformations to have group structures since many non-invertible transformations (e.g., cropping) do not fit into a group structure directly. 
For a set of data transformations  $\cG=\{g:\cX\rightarrow\cX\}$ and a data point (also referred to as an example) $\bx\in\cX$, we overload the notion of orbit in group theory and denote by $\cG(\bx)$ the orbit of $\bx$ defined as follows. 
The \emph{orbit} of $\bx$ generated by data transformations $\cG$ is the collection of outputs after applying any transformation $g\in \cG$ on $\bx$: $\cG(\bx)=\{g(\bx)\in \cX:g\in\cG\}$.

\shortsection{Model invariance}
Let $\cD$ be the underlying data distribution and $\text{supp}(\cD)$ be its support. A model $h:\cX\rightarrow\cY$ is said to be \emph{invariant} under data transformations $\cG$ on $\cD$
if $h(g(\bx))=h(\bx)$ for any $\bx\in\text{supp}(\cD)$ and any $g\in\cG$. We refer to a class of invariant models as the \emph{$\cG$-invariant model class}.

\shortsection{Complexity measurements}
\emph{Covering number} and \emph{Rademacher complexity} \cite{mohri2018foundations} are two commonly used complexity measurements for model classes (including neural networks \cite{bartlett_spectrally-normalized_2017}) that can provide uniform generalization bounds. The covering number can also be directly used to upper bound the Rademacher complexity via Dudley's entropy integral theorem \cite{Dudley_1967,mohri_2012_foundations}.

\emph{Covering number.} 
Let $(\cF, d)$ be a (pseudo)metric space with some (pseudo)metric\footnote{A pseudometric is a metric if and only if it separates distinct points, namely $d(x,y)>0$ for any $x\neq y$.} $d$.
An \emph{$\epsilon$-cover} of a set $\cH\subseteq\cF$ is defined as a subset $\hat{\cH}\subseteq\cH$ such that for any $h\in\cH$, there exists $\hat{h}\in\hat{\cH}$ such that $d(h,\hat{h})\le \epsilon$.
The covering number $N(\epsilon,\cH,  d)$ is defined as the minimum cardinality of an $\epsilon$-cover (among all $\epsilon$-covers) of $\cH$. In this paper, we use the concept of covering number both for measuring model class complexities and for defining the sample covering number on datasets. 

\indent \emph{Empirical Rademacher complexity.} Let $\cH$ be a class of functions $h:\cX\rightarrow\RR$. Given a sample $\cS=\{\bx_i \}_{i=1}^n$, the \emph{empirical Rademacher complexity} of model class $\cH$ is defined as: 
$\rR_{\cS}(\cH) = \EE_{\bsigma} \left[\sup_{h\in \cH}\frac{1}{n}\sum_{i=1}^n \sigma_i h(\bx_i)\right] $
where $\bsigma = [\sigma_1, ..., \sigma_n]^\top$ is the vector of i.i.d. Rademacher random variables, each uniformly chosen from $\{-1, 1\}$.

\shortsection{Generalization error and gap}
Let $\cS=\{\bx_i\}_{i=1}^n$ be a sample drawn i.i.d. from some data distribution $\cD$, and $\cH$ be a model class. Given a loss function $\ell: \RR \rightarrow [0,1]$, for a $h \in \cH$, we define the \emph{empirical error} as $R_{\cS}(h) = \frac{1}{n}\sum_{i=1}^n \ell(h(\bx_i),y_i)$, the \emph{generalization error} as $R(h) =  \EE_{(\bx,y)\sim\cD}[\ell(h(\bx),y)]$, and the \emph{generalization gap} as $R(h)-R_{\cS}(h)$.

\section{Generalization Benefit of Model Invariance}\label{sec:benefit}
In this section, we derive the generalization bounds for invariant models by identifying model invariance properties.
We start by introducing the notion of sample cover induced by data transformations and based on it bound the Rademacher complexity of any invariant models with bounded output
(Section~\ref{subsec:definition-sample-cover}).
Then, we assume model Lipschitzness to provide a more informative model complexity bound for any data transformations (Section~\ref{subsec:lipschitz}).
Finally, we provide a framework for model-invariance-sensitive generalization bounds and discuss the generalization benefit of model invariance
(Section~\ref{subsec:generalization-bounds}).

\subsection{Sample Cover Induced by Data Transformations} \label{subsec:definition-sample-cover}
Existing empirical results suggest that, compared with standard models, invariant models may have certain properties reducing their effective model complexities.
To identify such properties, we alternatively identify the related properties of the corresponding data transformations via the notion of \emph{sample cover induced by data transformations}.
We now formalize the introduced notion.

The definition of sample cover relies on the pseudometric induced by the data transformations $\cG$. 
Note that $\cG$ generates an orbit $\cG(\bx)\subseteq\cX$ for each example $\bx\in\cS$. Let $\|\cdot\|$ be any norm on the input space $\cX$. Given a set of transformations $\cG$, we define the $\cG$-induced pseudometric\footnote{Note that $\rho_\cG$ is not a metric since it allows $\rho_\cG(x,y)=0$ for $x\neq y$.} as
\begin{equation}
    \rho_\cG(\bx_1,\bx_2)=\inf_{\gamma\in\Gamma(\bx_1,\bx_2)} \int_\gamma c(\br) ds, \quad \text{where } c(\br)=\left\{ \begin{array}{cc}
    0, & \text{if $\br\in\cup_{\bx\in\cS} \cG(\bx)$} \\
    1, & \text{otherwise}
\end{array} \right. 
\end{equation}
where $ds=\|d\br\|$, and $\Gamma$ denotes the set of all paths (curves) in $\cX$ from $\bx_1$ to $\bx_2$. The $\rho_\cG$ is essentially calculating the line integral along the shortest (if achievable) path $\gamma$ in the scalar field $c$, where $c$ can also be viewed as the "moving cost" function depending on $\cG$. The norm $\|\cdot\|$ here can be selected as any meaningful norm on the input space (e.g., Euclidean norm as in our experiments) and will later be used in defining the model's Lipschitz constant. It can be checked that $\rho_\cG$ satisfies pseudometric axioms.

\begin{definition}[Sample cover induced by data transformations]
Let $(\cX, \rho_\cG)$ be a pseudometric space and $\cS=\{\bx_i\}_{i=1}^n$ be a sample of size $n$.
An \emph{$\epsilon$-sample cover} $\hat{\cS}_{\cG,\epsilon}$ of the sample $\cS$ induced by data transformations $\cG$ at resolution $\epsilon$
is defined as a subset of the sample $\cS$ 
such that for any ${\bx\in\cS}$, there exists $\hat{\bx}\in\hat{\cS}_{\cG,\epsilon}$ satisfying $\rho_\cG(\bx, \hat{\bx})\leq \epsilon$.
\end{definition}

\begin{definition}[Sample covering number induced by data transformations]
\label{def:sample-covering-number}
The \emph{sample covering number} $N(\epsilon, \cS, \rho_\cG)$ induced by data transformations $\cG$ is defined as the minimum cardinality of an $\epsilon$-sample cover:
\begin{equation}
    N(\epsilon, \cS, \rho_\cG) = \min\{ |\hat{\cS}_{\cG,\epsilon}|: \hat{\cS}_{\cG,\epsilon} \text{ is an } \epsilon\text{-sample cover of } \cS \}.
\end{equation}
\end{definition}

Informally, the $\cG$-induced sample cover specifies a representative subset of examples that can approximately recover all the original examples using the given data transformations $\cG$.
This notion is closely related to the \textit{sample compression} \citep{floyd1995sample} which represents a scheme to prove the learnability of concepts through a compressed set of samples. 
While identifying the generalization-related properties of data transformations, this notion is insensitive to other unrelated properties (e.g., finiteness, group structures) and thus applies to any data transformations.

The intuition behind sample cover is that $\cG$-invariant models may have consistent behaviors on an example and its associated approximation in the $\cG$-induced sample cover.
As such, we can analyze the model complexities of invariant models by considering the models' behavior only on the potentially small-sized sample covers.
Indeed, we directly have the following model complexity result. The proof is in Appendix~\ref{sec:app:proofs}. 

\begin{proposition}\label{prop:upper_bound}
Let $\cS=\{\bx_i\}_{i=1}^n$ be a sample of size $n$.
Let $\cH$ be a model class mapping from $\cX$ to $[-B, B]$ for some $B>0$ and is invariant to data transformations $\cG$. Then the empirical Rademacher complexity of $\cH$ satisfy
\begin{align}
\label{eqn:global-bound}
\rR_\cS(\cH) \le 24B \sqrt{\frac{N(0, \cS, \rho_\cG)}{n}}.
\end{align}
\end{proposition}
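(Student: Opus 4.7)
The plan is to exploit invariance to collapse the $n$-point sample onto a representative set of $M := N(0, \cS, \rho_{\cG})$ cover points, and then bound the Rademacher average by a one-step closed-form computation. Fix a minimum $0$-sample cover $\hat{\cS} = \{\hat{\bx}_1,\dots,\hat{\bx}_M\} \subseteq \cS$ and, for each $i \in \{1,\dots,n\}$, an index $c(i) \in \{1,\dots,M\}$ satisfying $\rho_{\cG}(\bx_i, \hat{\bx}_{c(i)}) = 0$; set $n_j := |\{i : c(i) = j\}|$, so $\sum_j n_j = n$. The central reduction step is to show that for every $h \in \cH$ we have $h(\bx_i) = h(\hat{\bx}_{c(i)})$. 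Since $\rho_{\cG}(\bx_i, \hat{\bx}_{c(i)}) = 0$, these two points are joined by a path lying in the zero-cost region $\cup_{\bx' \in \cS} \cG(\bx')$; on each orbit $\cG(\bx')$ visited by the path, any invariant $h$ is constant, and at an intersection of two orbits the shared value is forced to agree by double invariance, so $h$ is constant along the whole path.

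With this identification, rewrite
\begin{equation*}
\rR_{\cS}(\cH) \;=\; \frac{1}{n}\,\EE_{\bsigma}\!\left[\sup_{h \in \cH}\sum_{i=1}^n \sigma_i\, h(\bx_i)\right] \;=\; \frac{1}{n}\,\EE_{\bsigma}\!\left[\sup_{h \in \cH}\sum_{j=1}^M s_j\, h(\hat{\bx}_j)\right],
\end{equation*}
where $s_j := \sum_{i:\, c(i)=j} \sigma_i$ is a sum of $n_j$ independent Rademacher variables. Because $|h(\hat{\bx}_j)| \le B$, the inner supremum is at most $B\sum_{j=1}^M |s_j|$ for every realization of $\bsigma$. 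Jensen's inequality then yields $\EE|s_j| \le \sqrt{\EE s_j^2} = \sqrt{n_j}$, and Cauchy--Schwarz gives $\sum_j \sqrt{n_j} \le \sqrt{M \sum_j n_j} = \sqrt{Mn}$. Combining, $\rR_{\cS}(\cH) \le B\sqrt{M/n}$, which is in particular at most $24 B\sqrt{N(0,\cS,\rho_\cG)/n}$.

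The main obstacle is the invariance-transfer step. The definition of $\rho_{\cG}$ as an infimum of line integrals does not, a priori, produce a path of cost exactly zero; it only supplies paths of arbitrarily small cost, each of which could cross the cost-$1$ region $\cX \setminus \cup_{\bx' \in \cS}\cG(\bx')$ on a set of vanishing length on which invariance offers no control on $h$. Turning the heuristic ``zero-distance implies same $h$-value'' into an actual equality requires either attaining the infimum or imposing a topological regularity on orbits (for instance, the orbits being open, or $\rho_{\cG}(\bx,\hat{\bx}) = 0$ entailing a finite chain of pairwise-intersecting orbits joining $\bx$ to $\hat{\bx}$), so that $h(\bx_i) = h(\hat{\bx}_{c(i)})$ can be obtained in finitely many invariance hops. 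Once this topological ingredient is justified, the remaining Rademacher calculation above is routine.
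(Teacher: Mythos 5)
Your proof is correct, and the second half takes a genuinely different and more elementary route than the paper's. The paper proves this proposition by discretizing the output range $[-B,B]$ into a grid of spacing $2\tau$, which bounds the covering number of $\cH$ evaluated at a minimal $0$-sample cover by $(2B/\tau)^{m}$ with $m=N(0,\cS,\rho_\cG)$, and then feeds this into a refined Dudley entropy integral (itself a corollary of Theorem~\ref{thm:lipschitz} with $\epsilon=0$), picking up the factor $24$ from the integral $\int_0^1\sqrt{\log(1/t)}\,dt$. You instead collapse the Rademacher sum directly: grouping $\sum_i\sigma_i h(\bx_i)=\sum_j s_j h(\hat{\bx}_j)$ with $s_j$ a sum of $n_j$ signs, bounding the supremum by $B\sum_j|s_j|$, and finishing with $\EE|s_j|\le\sqrt{n_j}$ and Cauchy--Schwarz. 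This avoids chaining entirely and yields the sharper constant $B\sqrt{m/n}$ in place of $24B\sqrt{m/n}$; the price is that it uses only the crude bound $|h|\le B$ at each cover point, so unlike the paper's covering-number route it does not generalize to the $\epsilon>0$ Lipschitz setting of Theorem~\ref{thm:lipschitz}. For the proposition as stated, your argument is strictly cleaner.

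On the obstacle you flag: the reduction $h(\bx_i)=h(\hat{\bx}_{c(i)})$ when $\rho_\cG(\bx_i,\hat{\bx}_{c(i)})=0$ is indeed the only delicate step, and you are right that an infimum of path costs equal to zero does not by itself produce a finite chain of overlapping orbits (orbits with touching but disjoint closures give zero distance without any invariance constraint linking the values). You should be aware, however, that the paper's own proof rests on exactly the same step --- it invokes the $\epsilon=0$ case of its Lipschitz covering argument and asserts it ``holds for any model class'' --- so this is not a gap relative to the paper's argument but a shared assumption that orbits realizing zero $\rho_\cG$-distance actually intersect (which holds in the paper's intended use case, where $N(0,\cS,\rho_\cG)<n$ only when distinct sample points lie on literally overlapping orbits).
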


Proposition~\ref{prop:upper_bound} generally bounds the model complexity of any output-bounded and $\cG$-invariant model class in terms of the sample covering number $N(0, \cS, \rho_\cG)$ induced by $\cG$.
A small $\cG$-induced sample covering number at resolution $\epsilon=0$ thus tightens the model complexity bound for a general class of $\cG$-invariant models.

Note, however, that Proposition~\ref{prop:upper_bound} is informative only when the data transformations $\cG$ yields $N(0, \cS, \rho_\cG) \ll n$ on the sample $\cS$ ---
a condition requiring $\cG$ to be able to exactly recover $\cS$ from a small-sized subset of $\cS$.
This condition is unfortunately too strict to hold for many commonly used data transformations which only generate orbits with measure zero (with respect to the data measure) at most data points. For example, the rotation transformations on CIFAR-10 do not satisfy this condition, since no two images in CIFAR-10 are rotated versions of each other.
To better understand the generalization benefit brought by any data transformations (e.g., rotation), we further assume specific model properties which equivalently expand the orbits in order to get more general results.
We study Lipschitz models in Section~\ref{subsec:lipschitz}, and relegate a sharper (and relatively independent) analysis for linear models under linear data transformations to Appendix~\ref{subsec:linear}.

\subsection{Refined Complexity Analysis of Lipschitz Models}\label{subsec:lipschitz}
This subsection refines the model complexity analysis for Lipschitz models that are invariant.
Characterizing the Lipschitz constant of models has been the focus of a line of work.
For example, the Lipschitz constant of ReLU networks can be upper-bounded by the product of the spectral norms of the weight matrices, considering the worst-case inputs \citep{bartlett_spectrally-normalized_2017,golowich18a}.
Assuming Lipschitzness, the following theorem refines the covering number analysis for invariant models. The proof is in Appendix~\ref{sec:app:proofs}.

\begin{theorem}
\label{thm:lipschitz}
Let $\cS=\{\bx_i\}_{i=1}^n$ be a sample of size $n$.
Let $\cH$ be a model class such that every $h\in\cH$ is $\kappa$-Lipschitz with respect to $\|\cdot\|$ (used in defining the sample cover) and is invariant to $\cG$.
Then the covering number of $\cH$ satisfies
\begin{align}
\label{eqn:thm-lipschitz}
N\big(\tau, \cH, L_2(\PP_\cS)\big) \leq \inf_{\substack{\epsilon\ge0,\hat{\cS}_{\cG,\epsilon}}} N\big(\tau- \kappa\epsilon \sqrt{1-\frac{|{\hat{\cS}_{\cG,\epsilon}}|}{n}} , \cH, L_2(\PP_{\hat{\cS}_{\cG,\epsilon}})\big),
\end{align}
where $\forall{h,g}\in\cH$, the $L_2(\PP_\cS)$ metric is defined as $\|h-g\|_{L_2(\PP_\cS)}=\left(\sum_{\bx\in\cS} \frac{1}{n} \big(h(\bx)-g(\bx)\big)^2\right)^\frac{1}{2}$, and the $L_2(\PP_{\hat{\cS}_{\cG,\epsilon}})$ metric is defined as\footnote{The term $p(\bx)/n$ can be viewed as the probability mass at $\bx$ where the numerator indicates the number of examples that $\bx$ covers. See Appendix~\ref{subsec:proof-thm-3.4} for the formal definition of $p(\bx)$.} $\|h-g\|_{L_2(\PP_{\hat{\cS}_{\cG,\epsilon}})}=\left(\sum_{\bx\in{\hat{\cS}_{\cG,\epsilon}}} \frac{p(\bx)}{n} \big(h(\bx)-g(\bx)\big)^2\right)^\frac{1}{2}$.
\end{theorem}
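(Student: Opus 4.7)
I would prove the theorem by turning any cover of $\cH$ in the $L_2(\PP_{\hat{\cS}_{\cG,\epsilon}})$ metric into a cover in the $L_2(\PP_\cS)$ metric, paying only an additive correction of $\kappa\epsilon\sqrt{1-|\hat{\cS}_{\cG,\epsilon}|/n}$ in the radius. The engine of the argument is a preliminary observation that I would prove first: every $h\in\cH$, being $\cG$-invariant and $\kappa$-Lipschitz with respect to $\|\cdot\|$, is in fact $\kappa$-Lipschitz with respect to the pseudometric $\rho_\cG$, i.e.\ $|h(\bx_1)-h(\bx_2)|\le\kappa\,\rho_\cG(\bx_1,\bx_2)$. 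I obtain this by parameterizing any path $\gamma$ connecting $\bx_1$ and $\bx_2$: on the segments where $\gamma$ lies inside some orbit $\cG(\by)$ (so the cost $c$ vanishes), the invariance makes $h\circ\gamma$ constant, while on the remaining segments ordinary Lipschitzness bounds the variation of $h\circ\gamma$ by $\kappa$ times their $\|\cdot\|$-arclength; taking the infimum over paths gives the claim.

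\textbf{Main construction.} Next I fix any $\epsilon\ge 0$ and an $\epsilon$-sample cover $\hat{\cS}_{\cG,\epsilon}$, and define an assignment $\pi\colon\cS\to\hat{\cS}_{\cG,\epsilon}$ with $\rho_\cG(\bx,\pi(\bx))\le\epsilon$, choosing $\pi(\bx)=\bx$ whenever $\bx$ already lies in $\hat{\cS}_{\cG,\epsilon}$. The integer weight $p(\hat\bx):=|\pi^{-1}(\hat\bx)|$ is exactly what appears in the $L_2(\PP_{\hat{\cS}_{\cG,\epsilon}})$ norm. Given any $\tau'$-cover $\hat{\cH}$ of $\cH$ in $L_2(\PP_{\hat{\cS}_{\cG,\epsilon}})$, and any $h\in\cH$, pick $\hat h\in\hat{\cH}$ with $\|h-\hat h\|_{L_2(\PP_{\hat{\cS}_{\cG,\epsilon}})}\le\tau'$ and split
\begin{equation*}
(h-\hat h)(\bx)=(h-\hat h)(\pi(\bx))+r(\bx),\qquad r(\bx):=(h-\hat h)(\bx)-(h-\hat h)(\pi(\bx)).
\end{equation*}
Minkowski's inequality in $L_2(\PP_\cS)$ then gives $\|h-\hat h\|_{L_2(\PP_\cS)}\le A+B$ where $A$ is the $L_2(\PP_\cS)$-norm of the first summand and $B=\|r\|_{L_2(\PP_\cS)}$. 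Regrouping the sum in $A$ by the fibers of $\pi$ shows $A=\|h-\hat h\|_{L_2(\PP_{\hat{\cS}_{\cG,\epsilon}})}\le\tau'$ (this is exactly where the weights $p(\hat\bx)/n$ arise). Since $r$ vanishes on $\hat{\cS}_{\cG,\epsilon}$ and is pointwise bounded by $\kappa\epsilon$ elsewhere (by the key lemma), we get $B\le\kappa\epsilon\sqrt{1-|\hat{\cS}_{\cG,\epsilon}|/n}$. Choosing $\tau'=\tau-\kappa\epsilon\sqrt{1-|\hat{\cS}_{\cG,\epsilon}|/n}$ thus lifts $\hat{\cH}$ to a $\tau$-cover in $L_2(\PP_\cS)$; taking the infimum over $(\epsilon,\hat{\cS}_{\cG,\epsilon})$ delivers the stated inequality.

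\textbf{Main obstacle.} The delicate point is the pointwise residual bound $|r(\bx)|\le\kappa\epsilon$. A careless application of the key lemma to $h-\hat h$ would use only its $2\kappa$-Lipschitzness and produce an extra factor of two, inflating the final constant. To avoid this I would exploit the fact that the cover is drawn from $\cH$ itself---either by constructing $\hat h$ so that it agrees with $h$ on each $\hat\bx\in\hat{\cS}_{\cG,\epsilon}$ up to the cover radius (reducing $r$ to the oscillation of a single Lipschitz function along a $\rho_\cG$-short path), or by a finer accounting in the $L_2(\PP_\cS)$ expansion that keeps the oscillations of $h$ and $\hat h$ from adding in absolute value. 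This constant-tightening, together with the bookkeeping of the weights $p(\hat\bx)/n$, is where the real work of the proof lies; everything else is a routine application of triangle/Minkowski inequalities.
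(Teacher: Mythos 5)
Your architecture is the same as the paper's: prove that invariance plus $\kappa$-Lipschitzness makes every $h\in\cH$ $\kappa$-Lipschitz with respect to $\rho_\cG$ (the paper does exactly this path-integral argument, splitting the path into zero-cost segments inside orbits, where invariance kills the variation, and unit-cost segments, where ordinary Lipschitzness applies), then lift a cover at the sample cover to a cover at the full sample via the assignment map $\pi$, with the weights $p(\hat{\bx})$ arising as fiber sizes (the paper encodes $\pi$ as a $0$--$1$ matrix $\Pb$ and your ``regrouping by fibers'' is its identity $\rho_m(\Pb^\dagger h_{|S'},\hat h_{|\hat S})=\rho_n(h_{|S'},\hat h_{|S'})$). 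However, the decomposition you actually wrote down, $(h-\hat h)(\bx)=(h-\hat h)(\pi(\bx))+r(\bx)$, does not deliver the stated constant: $r(\bx)=[h(\bx)-h(\pi(\bx))]-[\hat h(\bx)-\hat h(\pi(\bx))]$ is the difference of two oscillations, each only bounded by $\kappa\epsilon$, so you get $|r(\bx)|\le 2\kappa\epsilon$ and hence an extra factor of $2$ in the correction term. You correctly flag this as the main obstacle, but neither of your proposed remedies is carried out, so as written the proof establishes the theorem only with $2\kappa\epsilon$ in place of $\kappa\epsilon$.

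The concrete fix --- and what the paper does --- is to change the covering element: instead of using $\hat h$ restricted to $\cS$, use the pulled-back vector $\hat h\circ\pi$ restricted to $\cS$ (in the paper's notation, $\Pb\hat h_{|\hat S}$). Then the relevant decomposition is
\begin{equation*}
h(\bx)-\hat h(\pi(\bx))=\bigl[h(\bx)-h(\pi(\bx))\bigr]+\bigl[h(\pi(\bx))-\hat h(\pi(\bx))\bigr],
\end{equation*}
where the first bracket is the oscillation of the \emph{single} function $h$ (bounded by $\kappa\epsilon$ via your key lemma, and vanishing on $\hat{\cS}_{\cG,\epsilon}$, giving exactly $\kappa\epsilon\sqrt{1-|\hat{\cS}_{\cG,\epsilon}|/n}$ in $L_2(\PP_\cS)$), and the second bracket's $L_2(\PP_\cS)$ norm regroups by fibers to $\|h-\hat h\|_{L_2(\PP_{\hat{\cS}_{\cG,\epsilon}})}\le\tau'$. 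No Lipschitz or invariance property of $\hat h$ is used at all, so the factor of $2$ never appears. The price is that the resulting cover is an external one (its elements $\hat h\circ\pi$ need not lie in $\cH$ evaluated at $\cS$); the paper silently accepts this, and it is harmless for the downstream Dudley-integral application. With that single change your argument closes and coincides with the paper's proof.
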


Theorem~\ref{thm:lipschitz} upper-bounds the covering number of $\cH$ evaluated at the sample $\cS$ by the new covering number evaluated at any sample cover ${\hat{\cS}_{\cG,\epsilon}}$, under a modified metric and at the cost of an additional error term depending on $\epsilon$ and $\kappa$. 
The equality trivially holds by taking ${\hat{\cS}_{\cG,\epsilon}}=\cS$, while by searching over all sample covers with different resolution $\epsilon$ it is possible to tighten the covering number bound for invariant models.
Additionally, Theorem~\ref{thm:lipschitz} leads to a refined version of Dudley’s entropy integral theorem (see Lemma~\ref{prop:lipschitz-refined-Ramemacher}) that bounds the Rademacher complexity of invariant models.

Theorem~\ref{thm:lipschitz} suggests that we may improve existing covering-number-based model complexity analysis by weakening the dependence on input dimensions.
Note that covering numbers that do not yield $N\big(\tau, \cH, L_2(\PP_{\cS})\big) / n \rightarrow 0$ as $n\rightarrow\infty$ are vacuous. 
Therefore, existing covering number results typically avoid linear dependence on $n$ at the cost of (explicitly or implicitly) increased dependence on the input dimension \cite{Zhang_2002}.
With the refined result in Theorem~\ref{thm:lipschitz}, however, a covering number linear in $n$ can now be replaced by one that is linear in a potentially much smaller sample covering number $N(\epsilon, \cS, \rho_\cG)$ and consequently become informative, thus circumvent the increased dependence on input dimensions.
An interesting direction for future work is to instantiate the result in Equation~\ref{eqn:thm-lipschitz} for specific model classes to get more interpretable results.

\subsection{Framework for Model-invariance-sensitive Generalization Bounds}\label{subsec:generalization-bounds}
This subsection presents the framework for generalization bounds sensitive to model invariance.
While the results are straightforward applications of the derived complexities of invariant models, our goal is to justify the selection of suitable data transformations to maximize the generalization benefit.
We start with the generalization analysis of invariant models and then present the framework.

\shortsection{Generalization benefit for invariant models}
The generalization bounds of invariant models follow immediately by applying the Rademacher model complexities (Proposition~\ref{prop:upper_bound}, Proposition~\ref{prop:lipschitz-refined-Ramemacher}, and Theorem~\ref{thm:linear}) to the standard generalization bound (Theorem~\ref{thm:standard-generalization}).
Compared with standard models, invariant models' tightened model complexity bounds already imply their reduced generalization gaps, whereas for reduced generalization error they further need to have low empirical error.
Since enforcing model invariance may simultaneously increase the empirical error, we can use standard model selection techniques (e.g., structural risk minimization \citep{mohri2018foundations}) to select suitable data transformations and control the trade-off.

\shortsection{Model-invariance-sensitive generalization bound}
We outline the generalization bound that identifies model invariance properties based on the derived invariant models' complexities.
It follows by the post-hoc analysis which specifies a proper set of invariant models using the "invariant loss" --- the loss when composed with any model, makes the composition invariant.
For data transformations with group structures, we can construct such loss by averaging (assuming Haar measure) or adversarially perturbing any given loss over the orbits of input examples \citep{lyle_benefits_2020,elesedy_provably_2021}.
Specifically, the adversarial loss with respect to data transformations $\cG$ is defined as
$\tilde{\ell}_\cG \left(h(x),y\right) = \max_{x'\in {\cG}(x)} \ell\left(h(x'),y\right)$, where $\ell$ is any given loss.
Using the adversarial loss, the following proposition provides the model-invariance-dependent generalization bound by applying the model selection framework \citep{mohri2018foundations}. Appendix~\ref{sec:app:dt-construction} further describes a binary coding construction of combinations of data transformation classes.

\begin{proposition}
\label{prop:uniform-bound}
Let $\cS=\{\bx_i\}_{i=1}^n$ be a sample of size $n$.
Let $\cH$ be any given model class and $\ell$ be any given loss.
Suppose we have $K$ sets of group-structured data transformations $\{\cG_1, \cG_2,... , \cG_K\}$. 
Then with probability at least $1-\delta$, the following generalization bound holds for any $h\in\cH$ and any $k\in[K]$:
\begin{align}
  R(h) &\le \frac{1}{n} \sum_{i=1}^n \tilde{\ell}_{\cG_k}(h(\bx_i), y_i) + 4\rR_{\cS}(\tilde{\ell}_{\cG_k} \circ \cH) + \sqrt{\frac{\log k}{n}} +  3\sqrt{\frac{\log\frac{4}{\delta}}{2n}},
\end{align}
\end{proposition}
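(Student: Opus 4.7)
The plan is to apply a standard empirical-Rademacher generalization bound to each composed loss class $\tilde{\ell}_{\cG_k}\circ\cH$ separately, and then combine the $K$ resulting high-probability statements via a structural-risk-minimization style union bound with a prior over $k\in[K]$.

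First I would observe that the ordinary generalization error is pointwise dominated by the adversarial population risk: because each $\cG_k$ has group structure, the identity lies in $\cG_k$, so $\bx\in\cG_k(\bx)$ and hence $\ell(h(\bx),y)\le\tilde{\ell}_{\cG_k}(h(\bx),y)$. Taking expectation under $\cD$ yields $R(h)\le \E_{(\bx,y)\sim\cD}[\tilde{\ell}_{\cG_k}(h(\bx),y)]$ for every $k$; this step is what lets the adversarial empirical risk control the ordinary generalization error $R(h)$ that appears on the left-hand side.

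Next, for each fixed $k$ I would apply the standard empirical-Rademacher generalization theorem (symmetrization combined with two applications of McDiarmid's inequality, one of which replaces the expected Rademacher complexity with its empirical counterpart) to the $[0,1]$-valued class $\tilde{\ell}_{\cG_k}\circ\cH$ at confidence $1-\delta_k$, to get a bound of the form
\[
R(h) \le \frac{1}{n}\sum_{i=1}^n \tilde{\ell}_{\cG_k}(h(\bx_i),y_i) + c_1\,\rR_{\cS}(\tilde{\ell}_{\cG_k}\circ\cH) + c_2\sqrt{\frac{\log(c_3/\delta_k)}{n}}
\]
for absolute constants $c_1,c_2,c_3$ tuned to the statement. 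Finally, I would union-bound over $k\in[K]$ against a prior $\{p_k\}$ with $\sum_{k}p_k\le 1$; setting $\delta_k=\delta\cdot p_k$ keeps the total failure probability at most $\delta$, and choosing $p_k$ appropriately (for instance $p_k\propto 1/k^2$ or $1/(k(k+1))$) makes $\log(1/\delta_k)$ grow like $\log k+O(\log(1/\delta))$. Using $\sqrt{a+b}\le\sqrt{a}+\sqrt{b}$ then separates the residual square-root term into the $\sqrt{\log k/n}$ summand and the $3\sqrt{\log(4/\delta)/(2n)}$ summand displayed in the proposition.

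The main obstacle is not conceptual but bookkeeping: matching the numerical constants in the statement (the factor of $4$ on $\rR_{\cS}$ and the constants $3$ and $4$ in the last term) with the two-sided concentration steps and the particular prior used on $k$. A minor technical point to verify is that the claimed bound holds simultaneously for every $h\in\cH$ and every $k\in[K]$, which is immediate: uniformity over $h$ is already built into each single-$k$ Rademacher bound, and uniformity over $k$ follows from the union bound. Beyond these calibration issues, the result reduces to the standard SRM template in Mohri et al.~\cite{mohri2018foundations}.
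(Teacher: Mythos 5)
Your proposal matches the paper's (largely implicit) argument: the paper proves this proposition simply by invoking the structural-risk-minimization / model-selection framework of Mohri et al., which is exactly the single-class Rademacher bound plus a union bound over $k$ with a $\sum_k k^{-2}$-type prior that you describe. You also correctly supply the one step the paper leaves unstated — that group structure puts the identity in $\cG_k$, so $\ell \le \tilde{\ell}_{\cG_k}$ pointwise and the adversarial risk dominates $R(h)$ — and the remaining constant-matching is indeed just bookkeeping.
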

where $\rR_{\cS}(\tilde{\ell}_{\cG_k} \circ \cH)$ is upper-bounded by the complexity of $\cG_k$-invariant models.
For any model trained on $\cS$, Proposition~\ref{prop:uniform-bound} shows that we can optimize over all selections of data transformations to improve its generalization bound.
Note that the selection of $\cG_k$ is subject to a potential trade-off between the reduced model complexity $\rR_{\cS}(\tilde{\ell}_{\cG_k} \circ \cH)$ and the increased empirical error $\sum_{i=1}^n \tilde{\ell}_{\cG_k}(h(x_i), y_i)$.
Thus, if a suitable $\cG_k$ reduces the model complexity while keeping the empirical error low, then the trained model will benefit from a tightened generalization bound.
This generalization bound does not require the models to be (strictly) invariant and potentially explains the improved generalization of models with trained invariance (e.g., via data augmentation \cite{taylor2018improving, shorten2019survey} or consistency regularization \cite{miyato2018virtual, xie2019unsupervised}).
The difficulty in instantiating Proposition~\ref{prop:uniform-bound} is that the model complexity with adversarial loss may be hard to compute for general data transformations.
Therefore, we discuss more practical data transformation selections based on the sample covering numbers in Section~\ref{sec:select-trans}.

\section{Sample Cover Estimation Algorithm}
\label{sec:estimation}
The sample cover induced by data transformations plays a central role in our understanding of model invariance.
Despite its usefulness, exactly computing the sample cover turns out to be non-trivial in general.
Indeed, computing the transformation-induced metrics can be difficult for continuous data transformations, and finding the \textit{smallest} sample cover is NP-hard.
To address this problem, we propose an algorithm to estimate the sample covering number and find the associated sample cover.
We outline the algorithm and discuss the algorithmic challenges in this section. The algorithmic details appear in Appendix \ref{app:estimate_sample_cover_num}.

\shortsection{Setup}
The estimation algorithm takes as input a sample $\cS$, a set of data transformations $\cG$, and the resolution parameter $\epsilon$. It then returns the estimated sample covering number $N(\epsilon, \cS, \rho_\cG)$ and the associated sample cover $\hat{\cS}_{\cG,\epsilon}$. The estimation algorithm has the following steps.

\shortsection{Step 1}
Compute (or approximate) the direct orbit distance between any two examples in $\cS$.
The direct orbit distance between any two examples $\bx_i, \bx_j \in \cS$ is
\begin{align*}
    d_{\cG}(\bx_i, \bx_j) = \|{\cG}(\bx_i) - {\cG}(\bx_j)\| = \min_{g_1, g_2 \in \cG} \|g_1(\bx_i)-g_2(\bx_j)\|,
\end{align*}
which can be exactly computed for finite transformations (e.g., flipping) with complexity $O(|\cG|^2)$), or can be approximated for continuous transformations (e.g., rotation) via optimization or sampling.

\shortsection{Step 2}
Compute the $\rho_\cG$ distance between any two examples in $\cS$.
Given results in step 1, computing the $\rho_\cG$ distance between any two examples can be formulated as a shortest path problem on a complete graph, where each node represents an example and the cost of each edge is the direct orbit distance computed in step 1 (see formulations in Appendix \ref{app:estimate_sample_cover_num}). 
Note that the shortest path is always included in our finite candidates even though the $\rho_\cG$ distance considers infinitely many paths. This is because any other path outside our finite candidates will be longer than its counterparts (depending on what orbits it intersects) in our finite candidates.
Standard shortest path algorithms solve for all pairs of examples in polynomial time (e.g., via Dijkstra’s algorithm \cite{dijkstra1959note} in $O(n^3)$).

\shortsection{Step 3}
Construct the pairwise distance matrix $[\rho_\cG(\bx_i, \bx_j)]_{i,j}$ and approximate the sample covering number.
This step can be formulated as a set cover problem where each example $\bx$ covers a subset of $\cS$ in which each element's $\rho_\cG$ distance to $\bx$ is less than or equal to $\epsilon$. Our goal is to find a minimum number of those subsets such that their union contains $\cS$. This problem is known to be NP-hard in general
but admits polynomial time approximations \cite{har2011geometric}. In experiments, we use modified k-medoids \cite{park2009simple} clustering method to find the approximation of $N(\epsilon, \cS, \rho_\cG)$ (see Algorithm \ref{alg:cluster}).

Note that the estimated sample covering number returned by the algorithm is always an upper bound of the ground truth, regardless of the approximation error in steps 1 and 3.
When step 1 is exact, the algorithm also exactly verifies whether a given subset of $\cS$ forms a valid sample cover.
In our experiment, step 2 becomes the computation bottleneck for large-sized samples.
We leave improving the scalability as well as evaluating the approximation quality for future work.

\section{Data-driven Selection of Data Transformations} \label{sec:select-trans}
The pool of candidate data transformations on a given dataset may be infinitely large. To maximize the generalization benefit of model invariance, we usually make selections based on expensive cross-validations due to the absence of model-training-free guidance.
Section \ref{sec:benefit} suggests that invariant models may benefit from improved generalization guarantees if the corresponding data transformations induce small sample covering numbers.
Therefore, we propose to use the sample covering number as an empirical suitability measurement to guide the data transformation selection. We discuss its advantages, limitations, and empirical mitigation in this section.

\shortsection{Suitability measurement}
To maximize the generalization benefit of model invariance on a dataset $\cS$, we measure the suitability of data transformations $\cG$ by the sample covering number induced by $\cG$ and favor the small ones.

\shortsection{Advantages}
One advantage of this suitability measurement is that it is model-training-free. 
It provides a-priori guidance depending only on the dataset and the data transformations, thus avoiding expensive cross-validations and fueling the exploration of new types of data transformations.
Another advantage is that it applies to any type of data transformation (including continuous and non-invertible ones) and provides a uniform benchmark.

\shortsection{Limitations and empirical mitigation}
Being model-agnostic also poses two limitations to the suitability measurement.
One limitation is that this suitability measurement, while capturing invariant models' reduced generalization gap, ignores their potentially increased empirical error.
Note that certain data transformations on a dataset may drastically increase invariant models' empirical error and overturn the benefit of a reduced generalization gap.
To mitigate this limitation, we consider two necessary conditions for maintaining low empirical error.
First, the data transformations should preserve the underlying ground-truth labeling. We may use domain knowledge to meet this condition.
Second, the model class should be rich enough to contain a low-error invariant hypothesis. In our experiment, neural networks which are invariant and achieve low training error suffices this condition.

Another limitation is that this suitability measurement ignores the models' potential Lipschitz constant change after enforcing the invariance.
Theorem~\ref{thm:lipschitz} suggests that the generalization benefit enjoyed by invariant models depends on models' Lipschitz constant and can be overturned if enforcing invariance leads to a significantly larger Lipschitz constant.
To mitigate this limitation, we use the fact that we are doing classification tasks and use the label information to heuristically offset the Lipschitz constant increase.
We use the minimum inter-class distance change after applying data transformations to capture the Lipschitz constant change and use it to normalize the sample covering number for better data transformation selections (see Appendix \ref{app:norm_scn}). 

\section{Experiments}\label{sec:experiment}
In this section, we implement the sample cover estimation algorithm and verify the effectiveness of using sample covering numbers to guide the data transformation selection.
We first estimate the sample covering number induced by different types of data transformations on some image datasets. 
Then, we investigate the actual generalization benefit for models invariant to those data transformations and analyze the correlation\footnote{Code is available at https://github.com/bangann/understanding-invariance.}. 

\shortsection{Datasets} 
We report experimental results on CIFAR-10 \cite{krizhevsky2009learning} and ShapeNet \cite{chang2015shapenet} in this section, and relegate results on CIFAR-100 and Restricted ImageNet to Appendix~\ref{app:exp_CIFAR-100}.
ShapeNet is a large-scale 3D data repository that enables us to do more complex data transformations (e.g., change of 3D-view) beyond the common 2D geometric transformations.
The work \cite{r2n2} provides 24 multi-view pre-rendered images for each 3D object in 10 chosen categories.
For convenience, we use those images to approximate the random perturbation of the 3D view. 

\shortsection{Data transformations}
We evaluate some commonly used data transformations with typical parameter settings which we assume to be label-preserving.
We choose \emph{flipping}, \emph{cropping}, and \emph{rotation} on CIFAR-10, and additionally consider the \emph{3D-view} change on ShapeNet.
We use the same data transformations with the same parameter settings during estimating the sample covering number and evaluating the generalization benefit. Appendix~\ref{app:exp} provides more details of our experimental settings.

\subsection{Estimation of Sample Covering Numbers} \label{exp:scn}

\begin{figure}
     \centering
    \subfigure[CIFAR-10]{\includegraphics[width=0.4\textwidth]{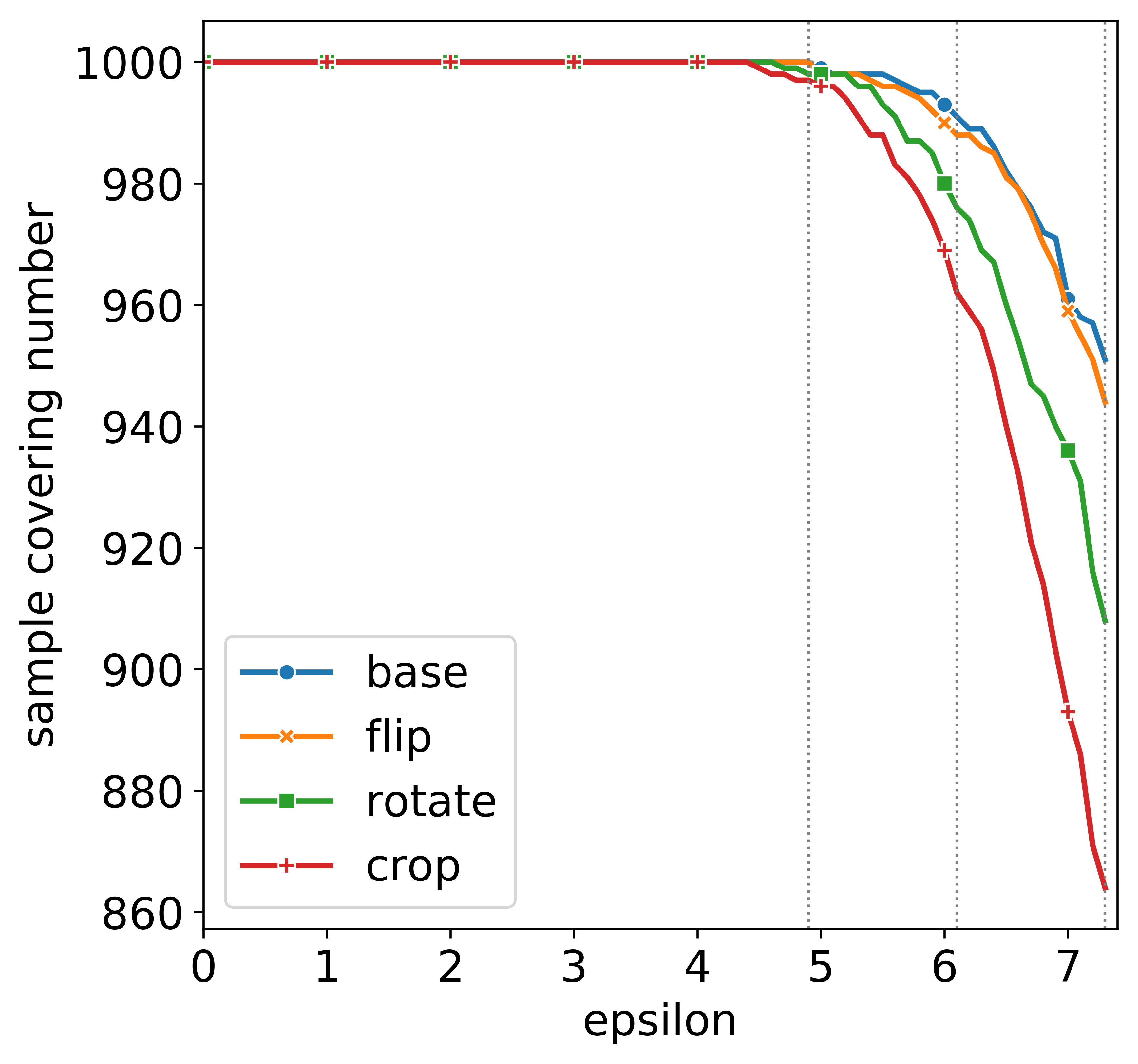}}
    \hfill
    \subfigure[ShapeNet]{\includegraphics[width=0.4\textwidth]{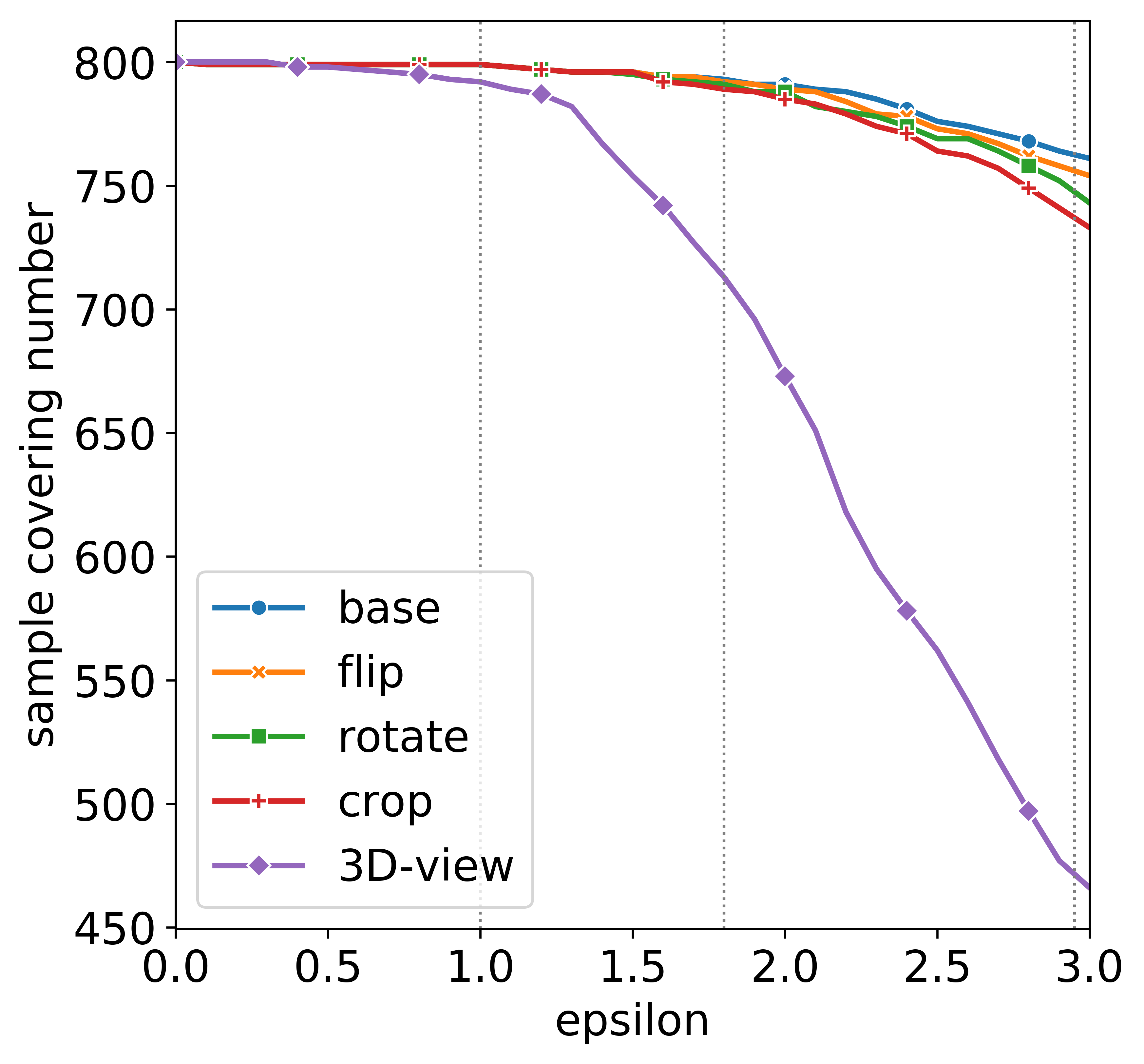}}
    \caption{Estimated sample covering numbers induced by different data transformations at different resolutions $\epsilon$. ``base'' indicates no transformation.
    Note that as $\epsilon$ increases, it starts to exceed the $L_2$ distance between some images and thus some images get covered by others without doing any transformation.
    Three vertical dashed lines indicate the maximum resolution $\epsilon$ at which the ``base'' yields a certain sample covering number, and from left to right they are $100\%n$, $99\%n$, $95\%n$.}
    \label{fig:covering_number}
\end{figure}

We implement the algorithm in Section~\ref{sec:estimation} to estimate the sample covering number induced by different data transformations.
For efficiency, we randomly sample 1000 training images from CIFAR-10 and randomly sample 800 training images from ShapeNet. Appendix~\ref{app:exp} compares results with different sample sizes.
We use the Euclidean norm for defining the sample cover.
For continuous data transformations, we do uniform random sampling to approximate the orbit of a data point.

Figure \ref{fig:covering_number} illustrates the estimated sample covering numbers induced by different transformations at different resolutions $\epsilon$.
As the resolution $\epsilon$ increases, the sample covering number $N(\epsilon, \cS, \rho_\cG)$ induced by any data transformation starts to decrease, indicating a smaller-sized sample cover needed to cover the entire dataset.
Meanwhile, different transformations behave differently.
On CIFAR-10, cropping induces the smallest sample covering number.
On ShapeNet, 3D-view transformation induces the smallest sample covering number and the gap is significant.
Our propositions suggest that data transformations that induce smaller sample covering numbers tend to bring more generalization benefits for the corresponding invariant models.
Therefore, Figure \ref{fig:covering_number} indicates that
models should generalize well if it is invariant to 3D-view transformation on ShapeNet or to cropping on CIFAR-10.

\subsection{Evaluation of Generalization Benefit}\label{exp:benefit}

\begin{table}
\resizebox{\columnwidth}{!}{
\begin{tabular}{lcccccc}
\toprule
            & \multicolumn{2}{c}{$n=100$} & \multicolumn{2}{c}{$n=1000$} & \multicolumn{2}{c}{$n=all$} \\ \cmidrule{2-7} 
Model       & acc (\%)       & gap        & acc (\%)       & gap         & acc (\%)       & gap        \\ \midrule
Base        & $41.05 \pm 0.52 $ & $58.95 \pm 0.52$ & $68.62\pm0.90$ & $31.38\pm 0.90$ & $85.43\pm0.35$ & $14.57\pm0.35$ \\
Flip        & $44.19 \pm 0.74$  & $55.81\pm0.74$ & $75.12\pm0.20$ & $24.88\pm0.20$  & $89.67\pm0.24$ & $10.33\pm0.24$ \\
Rotate      & $47.02 \pm 0.46$  & $52.93\pm0.51$ & $76.07\pm0.28$ & $23.92\pm0.27$  & $89.91\pm0.13$ & $10.05\pm0.16$ \\
Crop        & $\bf{50.47 \pm 0.48}$  & $\mathbf{49.53\pm0.48}$ & $\mathbf{81.84\pm0.12}$ & $\mathbf{18.15\pm0.11}$   & $\mathbf{92.52\pm0.08}$ & $\mathbf{7.48\pm0.08}$ \\ 
\bottomrule
\end{tabular}
}
\caption{Classification accuracy and generalization gap (the difference between training and test accuracy) for ResNet18 on CIFAR-10. $n$ denotes the sample size per class. }
\label{tab:cifar}
\end{table}
\begin{table}
\resizebox{\columnwidth}{!}{
\begin{tabular}{lcccccc}
\toprule
            & \multicolumn{2}{c}{$n=100$} & \multicolumn{2}{c}{$n=1000$} & \multicolumn{2}{c}{$n=all$} \\ \cmidrule{2-7} 
Model       & acc (\%)       & gap        & acc (\%)       & gap         & acc (\%)       & gap        \\ \midrule
Base        & $67.75 \pm 2.02 $ & $32.25\pm2.02$ & $83.33\pm0.38$ & $16.67\pm 0.38$ & $91.81\pm0.22$ & $8.18\pm0.22$ \\
Flip        & $69.75 \pm 1.55$  & $30.25\pm1.55$ & $84.24\pm0.30$ & $15.76\pm0.30$  & $92.07\pm0.20$ & $7.92\pm0.20$ \\
Rotate      & $70.25 \pm 1.19$  & $29.50\pm1.15$ & $83.93\pm0.38$ & $15.94\pm0.35$  & $91.85\pm0.20$ & $8.03\pm0.26$ \\
Crop        & $74.88 \pm 1.03$  & $23.53\pm1.30$ & $86.13\pm0.39$ & $13.75\pm0.32$  & $92.64\pm0.12$ & $7.17\pm0.19$ \\
3D-View     & $\bf{78.13 \pm 1.31}$  & $\mathbf{14.94\pm1.76}$ & $\mathbf{88.79\pm0.34}$ & $\mathbf{8.38\pm0.79}$   & $\mathbf{94.38\pm0.08}$ & $\mathbf{3.09\pm0.10}$ \\ 
\bottomrule
\end{tabular}
}
\caption{Classification accuracy and generalization gap (the difference between training and test accuracy) for ResNet18 on ShapeNet. $n$ denotes the sample size per class. }
\label{tab:shapenet}
\end{table}

We now evaluate the actual generalization performance of invariant models to verify if the sample covering number is a good suitability measurement.
We use ResNet18 \cite{he2016deep} on both datasets and discuss the influence of the model class's implicit bias in Appendix~\ref{app:exp}.
A simple method to learn invariant models is to do data augmentation.
The augmented loss function is $\cL_{aug} (\bx) = \cL(f(g(\bx)))$,  
where $f(\cdot)$ denotes the model and $g(\bx)$ denotes a randomly sampled example in $\bx$'s orbit induced by transformation $\cG$. We use this method on CIFAR-10 and ShapeNet and show results in Table \ref{tab:cifar} and \ref{tab:shapenet}. 

\shortsection{Sample covering number correlates well with generalization benefit} We use the generalization gap (the gap between training accuracy and test accuracy) to measure actual generalization benefit. Compared with the baseline, invariant models show an improved reduced generalization gap and also improved test accuracy. On CIFAR-10, the cropping-invariant model shows the smallest generalization gap and the highest accuracy.
On ShapeNet, the model that is invariant to 3D-view changes shows the smallest generalization gap and the highest accuracy, especially when the training data size is small.
By comparing results in Figure \ref{fig:covering_number} and Table \ref{tab:cifar}-\ref{tab:shapenet}, we observe a clear correlation between the smaller sample covering number and better generalization benefit.
This verifies our proposition --- invariance to more suitable data transformations gives the model more generalization benefit.

\shortsection{Model invariance indeed improves after learning}
\begin{table}
\small
\centering
\begin{tabular}{lccccc}
\toprule
$\lambda$ & train acc (\%) & test acc (\%) & gap & $\cL_{inv}$ & $\cA_{inv}(\%)$ \\
\midrule
$0$         & $99.99\pm 0.01$& $91.81 \pm 0.22$ & $8.19\pm0.22$ & $0.0548\pm0.0028 $ &  $62.0\pm 0.6$      \\
$0.01$       &  $99.98\pm 0.00$& $92.77 \pm 0.16$ & $7.21\pm0.16$ &  $0.0290\pm 0.0029$ & $74.78 \pm 1.61$       \\
$0.1$         &   $99.99\pm 0.00$& $93.87 \pm 0.19$ & $6.11\pm0.19$ &  $0.0152 \pm 0.0003$  & $83.12\pm0.50$      \\
$0.3$     &   $99.98\pm 0.00$& $94.23 \pm 0.11$ & $5.76\pm0.11$ &  $0.0121 \pm 0.0003$  & $85.10\pm0.20$\\
$1$         &   $99.58\pm 0.04$& $94.68 \pm 0.09$ & $4.90\pm0.09$ &   $0.0095 \pm 0.0001$  & $86.94\pm0.08$     \\
$3$      &   $97.74\pm 0.19$& $94.48 \pm 0.19$ & $3.26\pm0.09$ &     $0.0060\pm 0.0003$ & $88.15\pm0.18$     \\
$10$        &    $95.67\pm 0.26$& $93.56 \pm 0.29$ & $2.11\pm0.04$ &   $0.0037 \pm 0.0002$ & $89.20\pm0.16$      \\
$100$       &     $92.89\pm 0.25$& $91.85 \pm 0.26$ & $1.03\pm0.03$ &  $0.0018\pm 0.0001$ &  $89.82\pm0.10$     \\
\bottomrule
\end{tabular}
\caption{Evaluation of ResNet18 on ShapeNet under 3D-view transformations. $\cL_{inv}$ denotes the test invariance loss. $\cA_{inv}$ denotes the test consistency accuracy (indicating whether the model's prediction is unchanged after data transformation) under the worst-case data transformations.}
\label{tab:lambda}
\end{table}
To verify that the improved generalization is indeed brought by model invariance, we further enforce the invariance using the invariance regularization loss similar to \cite{Xie2020,zhang2019theoretically}: $\cL = \cL_{cls}(f(\bx)) + \lambda \text{KL}(f(\bx), f(g(\bx)))$..
Specifically, in addition to minimizing the classification loss on original images, we penalize the model by minimizing the KL divergence between model outputs on original images and on transformed ones. 
At test time, we use $\cL_{inv} (\bx) =\EE_{g_1,g_2\in\cG} [\text{KL}(f(g_1(\bx)), f(g_2(\bx)))]$ to evaluate model invariance under transformation $\cG$.
Table~\ref{tab:lambda} shows that, as we increase the invariance penalty by increasing $\lambda$, invariant models enjoy a smaller generalization gap.
Moreover, the decreased invariance loss and increased consistency accuracy show that model invariance indeed improves after training, demonstrating the generalization benefit brought by model invariance. 

\section{Related Work}\label{sec:related}
\shortsection{Understandings from the input space perspective}
One line of work characterizes the input space of invariant models.
\cite{anselmi_unsupervised_2014,anselmi_invariance_2015} show that the invariant representations equivalently reduce the input dimension for downstream tasks and thus significantly reduce the model complexity (exponential in input dimensions) of downstream linear models.
\cite{sokolic_generalization_2017,sannai_improved_2020} essentially factorize the input space into the product of a base space and a finite set of data transformations. 
Since the covering number needed to cover the base space is smaller, the associated generalization bound for invariant models is reduced.
Compared with these works, our work tries to cover the sample instead of the input space which circumvents the strong dependence on input dimensions and also enables practical evaluation.

\shortsection{Understandings from the function space perspective}
Another line of work directly characterizes the function space of invariant models.
\cite{lyle_benefits_2020} uses PAC-Bayes to show the reduction of generalization upper bound when the model class is symmetrized to be invariant.
\cite{elesedy_provably_2021} analyzes the function space under the feature averaging operator and shows the first strict generalization gap (instead of an upper bound) via a linear model.
This line of work currently restricts model invariance to be obtained exclusively via feature averaging.

Note that the categorization of different understanding perspectives is only for presentation convenience and has no formal distinctions.
Additionally, we mention some work that studies model invariance but does not focus on understanding its benefit.
\cite{abu-mostafa_hints_1993} proves that the VC dimension of an invariant model cannot be larger than its counterpart.
\cite{bloem-reddy_probabilistic_2020} characterizes the general functional representations of invariant probability distributions as well as neural network structures that implement them.
\cite{chen_group-theoretic_2020} uses group theory to show the benefit of learning with data-augmented loss.
In the predicting generalization competition at NeurIPS 2020 \cite{Jiang2020}, the runner-up team \citep{k2021robustness} shows that model robustness to data transformations can serve as an empirical proxy for predicting models' generalization performance.
\cite{robey2021modelbased} enforce model invariance to learned data transformations that capture inter-domain variation to improve the out-of-distribution generalization. 
\citep{Benton_Finzi_Izmailov_Wilson_2020} propose to select data transformations automatically from model training via optimizing parameterized distributions of data transformations. Interestingly, our sample covering numbers may be used to determine their regularization coefficients for better trade-offs.

\section{Conclusion}
This paper investigates the generalization advantage of model invariance by establishing model complexity bounds using the sample cover generated by data transformations. Additionally, we introduce an algorithm to estimate the sample cover and demonstrate that the sample covering number can aid in selecting suitable data transformations through empirical analysis. Our hope is that this research will encourage the exploration of more appropriate data transformations for particular datasets. One potential avenue for future research is to examine the implicit biases of model classes to improve our understanding of the generalization benefits of model invariance.

\section *{Acknowledgements}
This work is supported by a startup fund from the Department of Computer Science of the University of Maryland, National Science Foundation IIS CRII Award, DOD-DARPA-Defense Advanced Research Projects Agency Guaranteeing AI Robustness against Deception (GARD), Air Force Material Command, and Adobe, Capital One and JP Morgan faculty fellowships.

\bibliography{neurips_2021}

\begin{thebibliography}{10}

\bibitem{abu-mostafa_hints_1993}
Yaser~S. Abu-Mostafa.
\newblock Hints and the {VC} {Dimension}.
\newblock {\em Neural Comput.}, 5(2):278--288, March 1993.
\newblock Place: Cambridge, MA, USA Publisher: MIT Press.

\bibitem{Anselmi_Leibo_Rosasco_Mutch_Tacchetti_Poggio_2014}
Fabio Anselmi, Joel~Z. Leibo, Lorenzo Rosasco, Jim Mutch, Andrea Tacchetti, and
  Tomaso Poggio.
\newblock Unsupervised learning of invariant representations in hierarchical
  architectures.
\newblock {\em arXiv:1311.4158 [cs]}, Mar 2014.
\newblock arXiv: 1311.4158.

\bibitem{anselmi_unsupervised_2014}
Fabio Anselmi, Joel~Z. Leibo, Lorenzo Rosasco, Jim Mutch, Andrea Tacchetti, and
  Tomaso Poggio.
\newblock Unsupervised {Learning} of {Invariant} {Representations} in
  {Hierarchical} {Architectures}.
\newblock {\em arXiv:1311.4158 [cs]}, March 2014.
\newblock arXiv: 1311.4158.

\bibitem{anselmi_invariance_2015}
Fabio Anselmi, Lorenzo Rosasco, and Tomaso Poggio.
\newblock On {Invariance} and {Selectivity} in {Representation} {Learning}.
\newblock {\em arXiv:1503.05938 [cs]}, March 2015.
\newblock arXiv: 1503.05938.

\bibitem{awasthi_adversarial_2020}
Pranjal Awasthi, Natalie Frank, and Mehryar Mohri.
\newblock Adversarial {Learning} {Guarantees} for {Linear} {Hypotheses} and
  {Neural} {Networks}.
\newblock {\em arXiv:2004.13617 [cs, stat]}, April 2020.
\newblock arXiv: 2004.13617.

\bibitem{bartlett_spectrally-normalized_2017}
Peter~L Bartlett, Dylan~J Foster, and Matus~J Telgarsky.
\newblock Spectrally-normalized margin bounds for neural networks.
\newblock In {\em Advances in {Neural} {Information} {Processing} {Systems}
  30}, pages 6240--6249. Curran Associates, Inc., 2017.

\bibitem{Bartlett_Mendelson_2002}
Peter~L Bartlett and Shahar Mendelson.
\newblock Rademacher and gaussian complexities: Risk bounds and structural
  results.
\newblock {\em Journal of Machine Learning Research}, 3(Nov):463–482, 2002.

\bibitem{Benton_Finzi_Izmailov_Wilson_2020}
Gregory Benton, Marc Finzi, Pavel Izmailov, and Andrew~G Wilson.
\newblock Learning invariances in neural networks.
\newblock In H.~Larochelle, M.~Ranzato, R.~Hadsell, M.~F. Balcan, and H.~Lin,
  editors, {\em Advances in Neural Information Processing Systems}, volume~33,
  page 17605–17616. Curran Associates, Inc., 2020.

\bibitem{bloem-reddy_probabilistic_2020}
Benjamin Bloem-Reddy and Yee~Whye Teh.
\newblock Probabilistic symmetries and invariant neural networks.
\newblock {\em arXiv:1901.06082 [cs, stat]}, September 2020.
\newblock arXiv: 1901.06082.

\bibitem{chang2015shapenet}
Angel~X Chang, Thomas Funkhouser, Leonidas Guibas, Pat Hanrahan, Qixing Huang,
  Zimo Li, Silvio Savarese, Manolis Savva, Shuran Song, Hao Su, et~al.
\newblock Shapenet: An information-rich 3d model repository.
\newblock {\em arXiv preprint arXiv:1512.03012}, 2015.

\bibitem{chen_group-theoretic_2020}
Shuxiao Chen, Edgar Dobriban, and Jane Lee.
\newblock A {Group}-{Theoretic} {Framework} for {Data} {Augmentation}.
\newblock In H.~Larochelle, M.~Ranzato, R.~Hadsell, M.~F. Balcan, and H.~Lin,
  editors, {\em Advances in {Neural} {Information} {Processing} {Systems}},
  volume~33, pages 21321--21333. Curran Associates, Inc., 2020.

\bibitem{r2n2}
Christopher~B. Choy, Danfei Xu, JunYoung Gwak, Kevin Chen, and Silvio Savarese.
\newblock 3d-r2n2: {A} unified approach for single and multi-view 3d object
  reconstruction.
\newblock In Bastian Leibe, Jiri Matas, Nicu Sebe, and Max Welling, editors,
  {\em Computer Vision - {ECCV} 2016 - 14th European Conference, Amsterdam, The
  Netherlands, October 11-14, 2016, Proceedings, Part {VIII}}, volume 9912 of
  {\em Lecture Notes in Computer Science}, pages 628--644. Springer, 2016.

\bibitem{cohen_group_2016}
Taco Cohen and Max Welling.
\newblock Group {Equivariant} {Convolutional} {Networks}.
\newblock In {\em International {Conference} on {Machine} {Learning}}, pages
  2990--2999. PMLR, June 2016.
\newblock ISSN: 1938-7228.

\bibitem{Cohen_Geiger_Weiler_2019}
Taco~S. Cohen, Mario Geiger, and Maurice Weiler.
\newblock A general theory of equivariant cnns on homogeneous spaces.
\newblock In Hanna~M. Wallach, Hugo Larochelle, Alina Beygelzimer, Florence
  d’Alché Buc, Emily~B. Fox, and Roman Garnett, editors, {\em NeurIPS}, page
  9142–9153, 2019.

\bibitem{cutout}
Terrance Devries and Graham~W. Taylor.
\newblock Improved regularization of convolutional neural networks with cutout.
\newblock {\em CoRR}, abs/1708.04552, 2017.

\bibitem{dijkstra1959note}
Edsger~W Dijkstra.
\newblock A note on two problems in connexion with graphs.
\newblock {\em Numerische mathematik}, 1(1):269--271, 1959.

\bibitem{Dudley_1967}
R.~M. Dudley.
\newblock The sizes of compact subsets of hilbert space and continuity of
  gaussian processes.
\newblock {\em Journal of Functional Analysis}, 1(3):290–330, 1967.

\bibitem{Edelman_1995}
Shimon Edelman.
\newblock Class similarity and viewpoint invariance in the recognition of 3d
  objects.
\newblock {\em Biological Cybernetics}, 72(3):207–220, Feb 1995.

\bibitem{elesedy_provably_2021}
Bryn Elesedy and Sheheryar Zaidi.
\newblock Provably {Strict} {Generalisation} {Benefit} for {Equivariant}
  {Models}.
\newblock {\em arXiv:2102.10333 [cs, stat]}, February 2021.
\newblock arXiv: 2102.10333.

\bibitem{floyd1995sample}
Sally Floyd and Manfred Warmuth.
\newblock Sample compression, learnability, and the vapnik-chervonenkis
  dimension.
\newblock {\em Machine learning}, 21(3):269--304, 1995.

\bibitem{Gens_Domingos_2014}
Robert Gens and Pedro~M. Domingos.
\newblock Deep symmetry networks.
\newblock In Zoubin Ghahramani, Max Welling, Corinna Cortes, Neil~D. Lawrence,
  and Kilian~Q. Weinberger, editors, {\em NIPS}, page 2537–2545, 2014.

\bibitem{golowich18a}
Noah Golowich, Alexander Rakhlin, and Ohad Shamir.
\newblock Size-independent sample complexity of neural networks.
\newblock In Sébastien Bubeck, Vianney Perchet, and Philippe Rigollet,
  editors, {\em Proceedings of the 31st Conference On Learning Theory},
  volume~75 of {\em Proceedings of Machine Learning Research}, pages 297--299.
  PMLR, 06--09 Jul 2018.

\bibitem{Han_Roig_Geiger_Poggio_2020}
Yena Han, Gemma Roig, Gad Geiger, and Tomaso Poggio.
\newblock Scale and translation-invariance for novel objects in human vision.
\newblock {\em Scientific Reports}, 10(1):1411, Jan 2020.

\bibitem{har2011geometric}
Sariel Har-Peled.
\newblock {\em Geometric approximation algorithms}.
\newblock Number 173. American Mathematical Soc., 2011.

\bibitem{he2016deep}
Kaiming He, Xiangyu Zhang, Shaoqing Ren, and Jian Sun.
\newblock Deep residual learning for image recognition.
\newblock In {\em IEEE Conference on Computer Vision and Pattern Recognition},
  pages 770--778, 2016.

\bibitem{Jiang2020}
Yiding Jiang, Pierre Foret, Scott Yak, Daniel~M. Roy, Hossein Mobahi,
  Gintare~Karolina Dziugaite, Samy Bengio, Suriya Gunasekar, Isabelle Guyon,
  and Behnam Neyshabur.
\newblock Neurips 2020 competition: Predicting generalization in deep learning.
\newblock {\em CoRR}, abs/2012.07976, 2020.
\newblock arXiv: 2012.07976.

\bibitem{Jones_Jia_Raghunathan_Liang_2020}
Erik Jones, Robin Jia, Aditi Raghunathan, and Percy Liang.
\newblock Robust encodings: A framework for combating adversarial typos.
\newblock In {\em Proceedings of the 58th Annual Meeting of the Association for
  Computational Linguistics}, page 2752–2765. Association for Computational
  Linguistics, Jul 2020.

\bibitem{k2021robustness}
Sumukh~Aithal K, Dhruva Kashyap, and Natarajan Subramanyam.
\newblock Robustness to augmentations as a generalization metric, 2021.

\bibitem{krizhevsky2009learning}
Alex Krizhevsky, Geoffrey Hinton, et~al.
\newblock Learning multiple layers of features from tiny images.
\newblock {\em Master's thesis, Department of Computer Science, University of
  Toronto}, 2009.

\bibitem{lyle_benefits_2020}
Clare Lyle, Mark van~der Wilk, Marta Kwiatkowska, Yarin Gal, and Benjamin
  Bloem-Reddy.
\newblock On the {Benefits} of {Invariance} in {Neural} {Networks}.
\newblock {\em arXiv:2005.00178 [cs, stat]}, April 2020.
\newblock arXiv: 2005.00178.

\bibitem{miyato2018virtual}
Takeru Miyato, Shin-ichi Maeda, Masanori Koyama, and Shin Ishii.
\newblock Virtual adversarial training: a regularization method for supervised
  and semi-supervised learning.
\newblock {\em IEEE transactions on pattern analysis and machine intelligence},
  41(8):1979--1993, 2018.

\bibitem{mohri_2012_foundations}
Mehryar Mohri, Afshin Rostamizadeh, and Ameet Talwalkar.
\newblock {\em Foundations of Machine Learning}.
\newblock The MIT Press, 2012.

\bibitem{mohri2018foundations}
Mehryar Mohri, Afshin Rostamizadeh, and Ameet Talwalkar.
\newblock {\em Foundations of Machine Learning}.
\newblock MIT press, 2018.

\bibitem{Mroueh_Voinea_Poggio_2015}
Youssef Mroueh, Stephen Voinea, and Tomaso~A Poggio.
\newblock Learning with group invariant features: A kernel perspective.
\newblock In C.~Cortes, N.~Lawrence, D.~Lee, M.~Sugiyama, and R.~Garnett,
  editors, {\em Advances in Neural Information Processing Systems}, volume~28.
  Curran Associates, Inc., 2015.

\bibitem{park2009simple}
Hae-Sang Park and Chi-Hyuck Jun.
\newblock A simple and fast algorithm for k-medoids clustering.
\newblock {\em Expert systems with applications}, 36(2):3336--3341, 2009.

\bibitem{Pruthi_Dhingra_Lipton_2019}
Danish Pruthi, Bhuwan Dhingra, and Zachary~C. Lipton.
\newblock Combating adversarial misspellings with robust word recognition.
\newblock In Anna Korhonen, David~R. Traum, and Lluís Màrquez, editors, {\em
  ACL (1)}, page 5582–5591. Association for Computational Linguistics, 2019.

\bibitem{robey2021modelbased}
Alexander Robey, George~J. Pappas, and Hamed Hassani.
\newblock Model-based domain generalization.
\newblock In A.~Beygelzimer, Y.~Dauphin, P.~Liang, and J.~Wortman Vaughan,
  editors, {\em Advances in Neural Information Processing Systems}, 2021.

\bibitem{Sagawa_Koh_Hashimoto_Liang_2020}
Shiori Sagawa*, Pang~Wei Koh*, Tatsunori~B. Hashimoto, and Percy Liang.
\newblock Distributionally robust neural networks.
\newblock In {\em International Conference on Learning Representations}, 2020.

\bibitem{sannai_improved_2020}
Akiyoshi Sannai and Masaaki Imaizumi.
\newblock Improved {Generalization} {Bound} of {Group} {Invariant} /
  {Equivariant} {Deep} {Networks} via {Quotient} {Feature} {Space}.
\newblock {\em arXiv:1910.06552 [cs, stat]}, March 2020.
\newblock arXiv: 1910.06552.

\bibitem{schmidt_adversarially_2018}
Ludwig Schmidt, Shibani Santurkar, Dimitris Tsipras, Kunal Talwar, and
  Aleksander Madry.
\newblock Adversarially {Robust} {Generalization} {Requires} {More} {Data}.
\newblock In {\em Advances in {Neural} {Information} {Processing} {Systems}
  31}, pages 5014--5026. Curran Associates, Inc., 2018.

\bibitem{shorten2019survey}
Connor Shorten and Taghi~M Khoshgoftaar.
\newblock A survey on image data augmentation for deep learning.
\newblock {\em Journal of Big Data}, 6(1):1--48, 2019.

\bibitem{sokolic_generalization_2017}
Jure Sokolic, Raja Giryes, Guillermo Sapiro, and Miguel R.~D. Rodrigues.
\newblock Generalization {Error} of {Invariant} {Classifiers}.
\newblock {\em arXiv:1610.04574 [cs, stat]}, July 2017.
\newblock arXiv: 1610.04574.

\bibitem{taylor2018improving}
Luke Taylor and Geoff Nitschke.
\newblock Improving deep learning with generic data augmentation.
\newblock In {\em 2018 IEEE Symposium Series on Computational Intelligence
  (SSCI)}, pages 1542--1547. IEEE, 2018.

\bibitem{tsipras2018robustness}
Dimitris Tsipras, Shibani Santurkar, Logan Engstrom, Alexander Turner, and
  Aleksander M\k{a}dry.
\newblock Robustness may be at odds with accuracy.
\newblock In {\em International Conference on Learning Representations}, 2019.

\bibitem{Weiler_Geiger_Welling_2018}
Maurice Weiler, Mario Geiger, Max Welling, Wouter Boomsma, and Taco~S Cohen.
\newblock 3d steerable cnns: Learning rotationally equivariant features in
  volumetric data.
\newblock In S.~Bengio, H.~Wallach, H.~Larochelle, K.~Grauman, N.~Cesa-Bianchi,
  and R.~Garnett, editors, {\em Advances in Neural Information Processing
  Systems}, volume~31. Curran Associates, Inc., 2018.

\bibitem{Winkels_Cohen_2018}
Marysia Winkels and Taco~S. Cohen.
\newblock 3d g-cnns for pulmonary nodule detection.
\newblock {\em CoRR}, abs/1804.04656, 2018.

\bibitem{xie2019unsupervised}
Qizhe Xie, Zihang Dai, Eduard Hovy, Minh-Thang Luong, and Quoc~V Le.
\newblock Unsupervised data augmentation for consistency training.
\newblock {\em arXiv preprint arXiv:1904.12848}, 2019.

\bibitem{Xie2020}
Qizhe Xie, Zihang Dai, Eduard Hovy, Thang Luong, and Quoc Le.
\newblock Unsupervised data augmentation for consistency training.
\newblock In H.~Larochelle, M.~Ranzato, R.~Hadsell, M.~F. Balcan, and H.~Lin,
  editors, {\em Advances in Neural Information Processing Systems}, volume~33,
  page 6256–6268. Curran Associates, Inc., 2020.

\bibitem{Yin_Kannan_Bartlett_2019}
Dong Yin, Ramchandran Kannan, and Peter Bartlett.
\newblock Rademacher complexity for adversarially robust generalization.
\newblock In Kamalika Chaudhuri and Ruslan Salakhutdinov, editors, {\em
  Proceedings of the 36th International Conference on Machine Learning},
  volume~97 of {\em Proceedings of Machine Learning Research}, page
  7085–7094. PMLR, Jun 2019.

\bibitem{Zaheer_Kottur_Ravanbakhsh_2017}
Manzil Zaheer, Satwik Kottur, Siamak Ravanbakhsh, Barnabás Póczos, Ruslan
  Salakhutdinov, and Alexander~J. Smola.
\newblock Deep sets.
\newblock In Isabelle Guyon, Ulrike von Luxburg, Samy Bengio, Hanna~M. Wallach,
  Rob Fergus, S.~V.~N. Vishwanathan, and Roman Garnett, editors, {\em NIPS},
  page 3391–3401, 2017.

\bibitem{zhang2019theoretically}
Hongyang Zhang, Yaodong Yu, Jiantao Jiao, Eric Xing, Laurent El~Ghaoui, and
  Michael~I Jordan.
\newblock Theoretically principled trade-off between robustness and accuracy.
\newblock In {\em International Conference on Learning Representations}, 2019.

\bibitem{Zhang_2002}
Tong Zhang.
\newblock Covering number bounds of certain regularized linear function
  classes.
\newblock {\em J. Mach. Learn. Res.}, 2:527–550, Mar 2002.

\bibitem{Zhang_Zhao_LeCun_2015}
Xiang Zhang, Junbo Zhao, and Yann LeCun.
\newblock Character-level convolutional networks for text classification.
\newblock In Corinna Cortes, Neil~D. Lawrence, Daniel~D. Lee, Masashi Sugiyama,
  and Roman Garnett, editors, {\em NIPS}, page 649–657, 2015.

\end{thebibliography}
\bibliographystyle{plain}

\clearpage
\newpage
\appendix
\appendix
\section{Complexity Measurements and Generalization Bounds} \label{app:gen_bound}
In this section, we provide additional details on complexity measurements and generalization bounds.

The following lemma bounds the empirical Rademacher complexity of a function class $\cH$ via the covering number of $\cH$ evaluated at the sample $\cS$.
\begin{lemma} [Dudley's Entropy Integral Theorem \cite{Dudley_1967,mohri2018foundations}]
\label{lemma:dudley}
Let $\cH$ be a function class from $\cX$ to $\RR$. Then, for any $\alpha>0$,
\begin{align*}
\rR_\cS(\cH) \leq 4\alpha + 12\int_\alpha^\infty \sqrt{\frac{\log N\big(\tau, \cH, L_2(\PP_{\cS})\big)}{n}} d\tau.
\end{align*}
\end{lemma}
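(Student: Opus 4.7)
The plan is to use the standard chaining argument from empirical process theory. Fix $\alpha > 0$ and choose a dyadic sequence $\epsilon_k = 2^{-k} M$, where $M$ is an upper bound on $\sup_{h\in\cH}\|h\|_{L_2(\PP_\cS)}$ (assumed finite; otherwise the bound is trivially true). Let $K$ be the largest index with $\epsilon_K \geq \alpha$, and for each $k \in \{0,1,\dots,K\}$ let $\hat{\cH}_k$ be a minimal $\epsilon_k$-cover of $\cH$ in $L_2(\PP_\cS)$, so that $|\hat{\cH}_k| = N(\epsilon_k, \cH, L_2(\PP_\cS))$. Let $\pi_k : \cH \to \hat{\cH}_k$ send each $h$ to a nearest element in $\hat{\cH}_k$; by construction $\|h-\pi_k(h)\|_{L_2(\PP_\cS)} \leq \epsilon_k$.

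Next I would write each $h \in \cH$ using the telescoping identity
$$h \;=\; \pi_0(h) + \sum_{k=1}^{K} \bigl( \pi_k(h) - \pi_{k-1}(h) \bigr) + \bigl( h - \pi_K(h) \bigr),$$
and plug this into the definition of $\rR_\cS(\cH)$. By sub-additivity of the supremum, $\rR_\cS(\cH)$ is bounded by a base term (which is $0$ after centering, since $\pi_0(h)$ can be taken as a single function), a sum of increment terms, and a tail term. Cauchy--Schwarz on the tail gives $\EE_{\bsigma}\sup_h \tfrac{1}{n}\sum_i \sigma_i (h-\pi_K(h))(\bx_i) \leq \epsilon_K \leq \alpha$; a small amount of slack at the top and bottom of the chain (and using that $K$ may miss $\alpha$ by a factor of $2$) is what produces the prefactor $4\alpha$.

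For each level $k \geq 1$, the increment $\pi_k(h) - \pi_{k-1}(h)$ ranges over a finite set of cardinality at most $|\hat{\cH}_k|\,|\hat{\cH}_{k-1}| \leq N(\epsilon_k, \cH, L_2(\PP_\cS))^2$, and has $L_2(\PP_\cS)$ norm at most $\epsilon_k + \epsilon_{k-1} = 3\epsilon_k$. Applying Massart's finite-class lemma at each level then gives a Rademacher contribution of order $\epsilon_k\sqrt{\log N(\epsilon_k, \cH, L_2(\PP_\cS))/n}$. Summing over $k = 1,\dots,K$ and comparing the dyadic sum to the integral $\int_\alpha^{\infty} \sqrt{\log N(\tau, \cH, L_2(\PP_\cS))/n}\,d\tau$ via the monotonicity of $N(\cdot, \cH, L_2(\PP_\cS))$ in its first argument and the identity $\epsilon_{k-1}-\epsilon_k = \epsilon_k$ turns the per-level sum into Dudley's integral.

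The main obstacle is just the bookkeeping of constants: the Massart constant, the factor $3$ from $\epsilon_k + \epsilon_{k-1}$, and the sum-to-integral comparison must be tracked carefully to obtain the stated prefactors $4$ and $12$. Looser bookkeeping yields the same qualitative bound with larger universal constants, as is standard in the empirical-process literature; since this lemma is cited from \cite{Dudley_1967,mohri2018foundations} rather than proven afresh, the expected presentation is to sketch the chaining construction above and invoke Massart plus the sum-to-integral step, rather than optimize the numeric constants.
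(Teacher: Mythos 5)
The paper does not prove this lemma: it is imported verbatim from the cited references \cite{Dudley_1967,mohri2018foundations}, so there is no in-paper argument to compare against. Your sketch is the standard chaining proof of exactly this statement --- dyadic internal covers, the telescoping decomposition, Massart's finite-class lemma applied to the increment classes of size at most $N(\epsilon_k,\cH,L_2(\PP_\cS))^2$, and the dyadic-sum-to-integral comparison --- which is the argument given in the cited textbook; it is correct in outline, and the constants $4$ and $12$ arise exactly as you describe (the only bookkeeping caveat being that the top of the chain should start at the diameter of $\cH$ rather than at $\sup_h\|h\|_{L_2(\PP_\cS)}$ so that a single element of $\cH$ is a valid internal cover at level $0$, which is absorbed into the same constants).
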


The following theorem provides a uniform generalization bound for a function class via empirical Rademacher complexity. 
\begin{theorem}[\cite{Bartlett_Mendelson_2002,mohri2018foundations}]
\label{thm:standard-generalization}
Let $\cH$ be a function class from $\cX$ to $[0,B]$.
For any $\delta>0$, with probability at least $1-\delta$ over the draw of a sample $\cS$ with size $n$ according to data distribution $\cD$, the following holds for any $h\in\cH$:
\begin{align}
\label{eqn:generalizaion-bound}
  R(h) &\leq R_{\cS}(h) + 2B\rR_{\cS}(\cH) + 3B\sqrt{\frac{\log\frac{2}{\delta}}{2n}}
\end{align}
\end{theorem}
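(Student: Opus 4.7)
The plan is to give the textbook three-step uniform-convergence proof (McDiarmid, symmetrization, McDiarmid), which is the Bartlett--Mendelson/Mohri argument cited in the statement. First, define $\Phi(\cS)=\sup_{h\in\cH}\bigl(R(h)-R_\cS(h)\bigr)$. Since every $h\in\cH$ maps into $[0,B]$, swapping a single example in $\cS$ changes $R_\cS(h)$ by at most $B/n$ uniformly in $h$, so $\Phi$ has bounded differences of size $B/n$. McDiarmid's inequality then gives, with probability at least $1-\delta/2$,
\[
\Phi(\cS)\le\E_\cS[\Phi(\cS)]+B\sqrt{\tfrac{\log(2/\delta)}{2n}}.
\]

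Next I would bound the expectation by symmetrization, then pass from expected to empirical Rademacher complexity by a second McDiarmid. Introduce an independent ghost sample $\cS'\sim\cD^n$, write $R(h)=\E_{\cS'}[R_{\cS'}(h)]$, and pull $\E_{\cS'}$ outside the supremum via Jensen's inequality to obtain
\[
\E_\cS[\Phi(\cS)]\le\E_{\cS,\cS'}\!\left[\sup_{h\in\cH}\tfrac{1}{n}\textstyle\sum_{i=1}^n\bigl(h(\bx_i')-h(\bx_i)\bigr)\right].
\]
Exchangeability of $(\bx_i,\bx_i')$ lets me insert i.i.d.\ Rademacher signs $\sigma_i\in\{\pm1\}$ without changing the law, and splitting the positive and negative halves via the triangle inequality yields $\E_\cS[\Phi(\cS)]\le 2\E_\cS[\rR_\cS(\cH)]$ (any additional $B$ factor in front of $\rR_\cS(\cH)$ reflects the $[0,B]$ scaling convention in the statement). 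Because $\rR_\cS(\cH)$ itself has bounded differences of order $B/n$, a second McDiarmid gives, with probability at least $1-\delta/2$,
\[
\E_\cS[\rR_\cS(\cH)]\le\rR_\cS(\cH)+B\sqrt{\tfrac{\log(2/\delta)}{2n}}.
\]

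Chaining these through the trivial inequality $R(h)-R_\cS(h)\le\Phi(\cS)$, valid for every $h\in\cH$, and a union bound over the two concentration events produces the stated inequality: the $3B$ constant in the final term arises by summing the $B\sqrt{\log(2/\delta)/(2n)}$ from the first McDiarmid with the $2B\sqrt{\log(2/\delta)/(2n)}$ that picks up the factor $2$ carried over from symmetrization. The only step requiring genuine care is the symmetrization, where I must justify pulling $\E_{\cS'}$ through the supremum via Jensen (it yields only an upper bound, which is exactly what we want) and verify that the Rademacher sign swap preserves the joint law by exchangeability of $(\bx_i,\bx_i')$. Everything else is routine bookkeeping with McDiarmid's inequality and a union bound, so this step is both the main obstacle and entirely standard.
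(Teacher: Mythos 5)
The paper imports this theorem from Bartlett--Mendelson and Mohri et al.\ without reproving it, and your McDiarmid--symmetrization--McDiarmid argument is exactly the proof given in those references; the bookkeeping checks out (bounded differences of $B/n$ for both $\Phi$ and $\rR_\cS(\cH)$, a union bound over the two $\delta/2$ events, and $B+2B=3B$ for the deviation term). The one caveat is that your symmetrization natively yields $2\rR_\cS(\cH)$ rather than $2B\rR_\cS(\cH)$ for a class mapping into $[0,B]$, so the stated middle term is obtained only under the convention that $\rR_\cS$ is taken of the $[0,1]$-rescaled class (or as a looser bound when $B\ge 1$), which you correctly flag as a scaling convention rather than a gap.
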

We can plug the refined Rademacher complexity bounds in Proposition~\ref{prop:lipschitz-refined-Ramemacher} and Theorem~\ref{thm:linear} into \eqref{eqn:generalizaion-bound} to get refined generalization bounds for certain invariant models.

\section{Proofs}\label{sec:app:proofs}
We first prove Theorem~\ref{thm:lipschitz}, and then Proposition~\ref{prop:upper_bound}.

\subsection{Proof of Theorem~\ref{thm:lipschitz}} \label{subsec:proof-thm-3.4}
\begin{proof}[Proof of Theorem~\ref{thm:lipschitz}]

The general idea of this proof is to show that any cover of  \textit{a model class evaluated at a sample cover} also generates a same-sized cover of \textit{the model class evaluated at the original sample} with some additional approximation error. 
The covering number inequality in \eqref{eqn:thm-lipschitz} then follows by taking the minimization over all covers of the model class evaluated at the original sample.
Since this proof includes some tedious notations, we first restate the problem setup and then go to the details.

\shortsection{Problem setup}
Let $\cS=\{\bx_1, \bx_2, \ldots, \bx_n\}$ be a sample of size $n$.
Let $\hat{\cS}\subseteq \cS$ be an $\epsilon$-cover of $\cS$ with respect to $\rho_\cG$ and has size $m$.
Without loss of generality, we then vectorize $\cS$ and $\hat{\cS}$ for notation simplicity.
Denote by $S=(\bx_1, \bx_2, ..., \bx_n)^T$ the vectorized sample associated with $\cS$ in some arbitrary but fixed order. 
Denote by $\hat{S}=(\hat{\bx}_1, \hat{\bx}_2, ..., \hat{\bx}_m)^T$ the vectorized sample cover associated with $\hat{\cS}$ in some arbitrary but fixed order.
$S$ and $\hat{S}$ thus define a matrix $\Pb$ below indicating how $\hat{S}$ approximately recovers $S$:
$$ \Pb=(p_{ij})\in\RR^{n\times m} \quad\text{such that}\quad p_{ij}=\left\{ \begin{array}{cc}
    1, & \text{if $\bx_i\in\cS$ is approximated by $\hat{\bx}_j\in\hat{\cS}$} \\
    0, & \text{otherwise}
\end{array} \right. .$$
We use arbitrary tie-breaking rule when a data point $\bx\in\cS$ can be approximated by multiple $\hat{\bx}\in\hat{\cS}$.
Without loss of generality, we also assume that there is no "redundant" element in $\hat{\cS}$ which is not used in recovering $\cS$, since otherwise it can be removed from $\hat{\cS}$ for a strictly smaller cardinality.
Therefore, by definition, $\Pb$ has linearly independent columns and thus represents an injective mapping from $\RR^m$ to $\RR^n$.
We denote by $S'$ the approximately recovered $S$ generated by $\hat{S}$: $S'=\Pb \hat{S}$.
The first line in Table~\ref{tab:diagram-of-proof} shows the relationship among $\hat{S}$, $\hat{S'}$, and $S$. 

\begin{table}
\centering
\begin{tabular}{ccccc}
$\hat{S}$ & $\xrightarrow[]{\quad\text{generates}\quad}$ & $S'=P\hat{S}$ & $\xrightarrow[]{\quad\text{approximates}\quad}$ & $S$ \\ \\
$\cT(\cH_{|\hat{S}})$ & $\xrightarrow[\text{\textbf{step (I)}}]{\quad\text{generates}\quad}$ & $\cT(\cH_{|S'})$ & $\xrightarrow[\text{\textbf{step (II)}}]{\quad\text{is also}\quad}$ &  $\cT(\cH_{|S})$
\end{tabular}
\caption{A diagram of the proof of Theorem~\ref{thm:lipschitz}.}
\label{tab:diagram-of-proof}
\end{table}

\begin{table}
\centering
\begin{tabular}{lll}
\toprule
\textbf{Space} & \textbf{Vector} & \textbf{Vector (in the cover)} \\
\midrule
$(\RR^m, \rho_m)\quad$ & $h_{|\hat{S}}\in\cH_{|\hat{S}}\quad$ & $\hat{h}_{|\hat{S}}\in\cT(\cH_{|\hat{S}})$ \\
$(\RR^n, \rho_n)\quad$ & $h_{|S'}\in\cH_{|S'}\quad$ & $\hat{h}_{|S'}\in\cT(\cH_{|S'})$ \\
$(\RR^n, \rho_n)\quad$ & $h_{|S}\in\cH_{|S}\quad$ & $\hat{h}_{|S}\in\cT(\cH_{|S})$ \\
\bottomrule
\end{tabular}
\caption{Some notations used in the proof of Theorem~\ref{thm:lipschitz}.}
\label{tab:appendix-proof-notations}
\end{table}

Based on the definition of $\Pb$, we now give the precise definition of $p(\bx)$ used in defining the (pseudo)metrics in this theorem. 
$$ p(\hat{\bx}_j) = \sum_{i=1}^n p_{ij}, \quad \forall j\in[m]. $$

We proceed to introduce notations for the model class.
Instead of considering the model class $\cH$ under the metric induced by the function norm $L_2(\PP_\cS)$ (or $L_2(\PP_{\hat{\cS}})$), we equivalently consider the evaluation of $\cH$ at $S$ (or $\hat{S}$) under the metric $\rho_n$ (or $\rho_m$) in this proof for notation simplicity.
We denote the evaluation of $\cH$ at $S$ as $\cH_{|S} = \{ (h(\bx_1),\ldots,h(\bx_n))^T: h\in\cH \}$, and similarly its evaluation at $\hat{S}$ as $\cH_{|\hat{S}} = \{ (h(\hat{\bx}_1),\ldots,h(\hat{\bx}_m))^T: h\in\cH \}$.
We define the metric $\rho_n$ on $\RR^n$ as $\rho_n(u, u')=\frac{1}{\sqrt{n}}\|u-u'\|_2$, and the metric $\rho_m$ on $\RR^m$ as $\rho_m(v, v')=\frac{1}{\sqrt{n}}\|(\Pb^T\Pb)^{\frac{1}{2}}(v-v')\|_2$.
Therefore, the covering number notation $N\big(\tau, \cH, L_2(\PP_\cS)\big)$ is equivalent to $N(\tau, \cH_{|S}, \rho_n)$, and $N\big(\tau, \cH, L_2(\PP_{\hat{\cS}_{\cG,\epsilon}})\big)$ is equivalent to $N(\tau, \cH_{|\hat{S}}, \rho_m)$. Table~\ref{tab:appendix-proof-notations} shows an overview of these notations.

\shortsection{Summary}
The proof has the following steps.
\textbf{(I)} Any cover $\cT(\cH_{|\hat{S}})$ of a model class evaluated at the sample cover $\hat{S}$ generates a same-sized cover $\cT(\cH_{|S'})$ of the model class evaluated at the approximated sample $S'$.
\textbf{(II)} The cover $\cT(\cH_{|S'})$ of the model class evaluated at the approximated sample is also a cover $\cT(\cH_{|S})$ of the model class evaluated at the original sample $S$.
\textbf{(III)} The covering number inequality follows by taking the minimization over all covers of the model class evaluated at the original sample $S$.

\shortsection{Step (I)}
We first show that any cover $\cT(\cH_{|\hat{S}})$ of $\cH_{|\hat{S}}$ generates a set, denoted as $\cT(\cH_{|S'})$, with the same cardinality. Given any $\cT(\cH_{|\hat{S}})$, we construct $\cT(\cH_{|S'})=\{ \Pb\hat{h}_{|\hat{S}}: \hat{h}_{|\hat{S}}\in \cT(\cH_{|\hat{S}}) \}$. Since $\Pb$ represents injective mapping from $\RR^m$ to $\RR^n$, we have $|\cT(\cH_{|S'})|=|\cT(\cH_{|\hat{S}})|$ by construction.

Then, we show that $\cT(\cH_{|S'})$ is a $\tau$-cover of $\cH_{|S'}$ with respect to $\rho_n$ if $\cT(\cH_{|\hat{S}})$ is a $\tau$-cover of $\cH_{|\hat{S}}$ with respect to $\rho_m$.
By the definition of $\Pb$, it holds that $h_{|S'}=\Pb h_{|\hat{S}}$ for any $h\in\cH$, and $\Pb^\dagger\Pb=\Ib$ where $\Pb^\dagger$ is the Moore–Penrose inverse of $\Pb$ since $\Pb$ has linearly independent columns. 
Thus, for any $h_{|S'}\in\cH_{|S'}$, we can project it to $\cH_{|\hat{S}}$ by $\Pb^\dagger h_{|S'}$.
Given that $\cT(\cH_{|\hat{S}})$ is a $\tau$-cover of $\cH_{|\hat{S}}$ with respect to $\rho_m$, for any $h_{|S'}\in\cH_{|S'}$, there exists $\hat{h}_{|\hat{S}} \in \cT(\cH_{|\hat{S}})$ such that $\rho_m(\Pb^\dagger h_{|S'}, \hat{h}_{|\hat{S}})\le \tau$. It follows that
\begin{align*}
\rho_m(\Pb^\dagger h_{|S'}, \hat{h}_{|\hat{S}}) &= \frac{1}{\sqrt{n}}\| (\Pb^T\Pb)^{\frac{1}{2}} (\Pb^\dagger h_{|S'} - \hat{h}_{|\hat{S}}) \|_2 \\
&= \frac{1}{\sqrt{n}} \sqrt{ (\Pb^\dagger h_{|S'} - \hat{h}_{|\hat{S}})^T (\Pb^T\Pb) (\Pb^\dagger h_{|S'} - \hat{h}_{|\hat{S}}) } \\
&= \frac{1}{\sqrt{n}} \sqrt{ (h_{|S'} - \Pb\hat{h}_{|\hat{S}})^T (h_{|S'} - \Pb\hat{h}_{|\hat{S}}) } \\
&= \frac{1}{\sqrt{n}}\|  (h_{|S'} - \Pb\hat{h}_{|\hat{S}} )\|_2 \\
&= \rho_n(h_{|S'}, \hat{h}_{|S'}) \le \tau,
\end{align*}
where $\Pb\Pb^\dagger h_{|S'}=\Pb\Pb^\dagger\Pb h_{|\hat{S}}=\Pb h_{|\hat{S}}=h_{|S'}$, and $\hat{h}_{|S'}=\Pb\hat{h}_{|\hat{S}}$ is in $\cT(\cH_{|S'})$ by construction and approximates the given $h_{|S'}$.
Therefore, for any $h_{|S'}\in\cH_{|S'}$, there exists $\hat{h}_{|S'}\in \cT(\cH_{|S'})$ such that $\rho_n(h_{|S'}, \hat{h}_{|S'})\leq \tau$, which implies that $\cT(\cH_{|S'})$ is a $\tau$-cover of $\cH_{|S'}$.

\shortsection{Step (II)}
We proceed to show that $\cT(\cH_{|S'})$ is also a $(\tau+\kappa\epsilon\sqrt{1-\frac{|\hat{\cS}|}{n}})$-cover of $\cH_{|S}$.
Consider any index $i\in[n]$.
Given that $\hat{\cS}$ is an $\epsilon$-sample cover of $\cS$ with respect to $\rho_\cG$, we have $\rho_\cG(\bx_i, \bx'_i) = \inf_{\gamma\in\Gamma(\bx_i, \bx'_i)} \int_\gamma c(r) dr \le \epsilon$. Moreover, for any $\xi>0$, by the definition of infimum there exists a path $\gamma_0$ such that $\int_{\gamma_0} c(r) dr \le \epsilon + \xi$. 
The following result then shows that the evaluations of any $h\in\cH$ at data points $\bx_i$ and $\bx'_i$ are close (let $\nabla_{\bx} h\in\partial h(\bx)$ when $h$ is only subdifferentiable at some $\bx$):
\begin{align*}
|h(\bx_i)-h(\bx'_i)| =& \int_{\gamma_0} \nabla_{\bx} h(\br) \cdot d\br \\
\le& \int_{\gamma_0} \|\nabla_{\bx} h(\br)\|\, ds \tag{$ds = \|d\br\|$} \\
=& \int_{\gamma_0} \|\nabla_{\bx} h(\br)\|\, c(\br) ds \tag{invariance of $h$}\\
\le&  \kappa \int_{\gamma_0} c(\br) ds \tag{Lipschitzness of $h$}\\
=& \kappa(\epsilon+\xi).
\end{align*}
Since it holds for any $\xi>0$, we have $|h(\bx_i)-h(\bx'_i)| \le \kappa\epsilon$.

Thus, the evaluations of any $h\in\cH$ at samples $S$ and $S'$ are close with respect to $\rho_n$:
\begin{align*}
\frac{1}{\sqrt{n}} \| h_{|S} - h_{|S'} \|_2 = \frac{1}{\sqrt{n}} \sqrt{\sum_{i=1}^n \left(h(\bx_i)-h(\hat{\bx}_i)\right)^2} 
\leq \frac{1}{\sqrt{n}} \sqrt{(\kappa\epsilon)^2 (n-|\hat{\cS}|)}
= \kappa\epsilon\sqrt{1-\frac{|\hat{\cS}|}{n}}.
\end{align*}

Therefore, given any $h_{|S}\in \cH_{|S}$, we have $h_{|S'}\in \cH_{|S'}$ such that $\rho_n(h_{|S}, h_{|S'}) \le \kappa\epsilon\sqrt{1-\frac{|\hat{\cS}|}{n}}$ and we can find $\hat{h}_{|S'} \in \cT(\cH_{|S'})$ such that $\rho_n(h_{|S'}, \hat{h}_{|S'}) \le \tau$ since $\cT(\cH_{|S'})$ is an $\tau$-cover of $\cH_{|S'}$. It then follows that $\hat{h}_{|S'}$ approximates $h_{|S}$:
$$ \rho_n(h_{|S}, \hat{h}_{|S'}) \le \rho_n(h_{|S}, h_{|S'}) + \rho_n(h_{|S'}, \hat{h}_{|S'}) \le \tau + \kappa\epsilon\sqrt{1-\frac{|\hat{\cS}|}{n}}, $$
which implies that $\cT(\cH_{|S'})$ is a $(\tau+\kappa\epsilon\sqrt{1-\frac{|\hat{\cS}|}{n}})$-cover of $\cH_{|S}$.

\shortsection{Step (III)}
The final covering number inequality proceeds as follows.
Note that any $\tau$-cover $\cT(\cH_{|\hat{S}})$ of $\cH_{|\hat{S}}$ generates an $(\tau+\kappa\epsilon\sqrt{1-\frac{|\hat{\cS}|}{n}})$-cover $\cT(\cH_{|S})$ of $\cH_{|S}$ such that $|\cT(\cH_{|\hat{S}})| = |\cT(\cH_{|S})|$. 
The set of all covers of $\cH_{|\hat{S}}$ then generates a set of covers of $\cH_{|S}$, which further constitutes a subset of all covers of $\cH_{|S}$. Thus, we have the following covering number inequality:
\begin{align*}
& N(\tau+\kappa\epsilon\sqrt{1-\frac{|\hat{\cS}|}{n}}, \cH_{|S}, \rho_n) \\
=& \min\{ |\cT(\cH_{|S})|: \cT(\cH_{|S}) \text{ is a cover of } \cH_{|S} \} \\
\le& \min\{ |\cT(\cH_{|S})|: \cT(\cH_{|S}) \text{ is a cover of } \cH_{|S} \text{ and is generated by } \cT(\cH_{|\hat{S}}) \} \\
=& \min\{ |\cT(\cH_{|\hat{S}})|: \cT(\cH_{|\hat{S}}) \text{ is a cover of } \cH_{|\hat{S}} \} \\
=& N(\tau, \cH_{|\hat{S}}, \rho_m).
\end{align*} 

Since this inequality holds for any resolution $\epsilon$ and any $\epsilon$-sample cover, taking the infimum over all resolutions and sample covers and replacing variables then yields the inequality in \eqref{eqn:thm-lipschitz}.

\end{proof}

\subsection{Proof of Proposition~\ref{prop:upper_bound}}
We first provide a lemma for proving Proposition~\ref{prop:upper_bound}.
Note that theorem~\ref{thm:lipschitz} directly leads to a refined empirical Rademacher complexity bound in terms of the covering number of $\cH$ evaluated at the sample cover. The following lemma is a weaker but simpler version. We can set $\epsilon\rightarrow0$ as $n\rightarrow\infty$ to further suppress the additional error term on large samples.

\begin{lemma}[Refined Rademacher complexity of $\cG$-invariant $\cH$]\label{prop:lipschitz-refined-Ramemacher}
Let $\cS=\{\bx_i\}_{i=1}^n$ be a sample of size $n$.
Let $\cH$ be a model class such that every $h\in\cH$ is $\kappa$-Lipschitz with respect to $\|\cdot\|$ and invariant to $\cG$. 
Given any $\epsilon>0$, $\alpha>0$, let ${\hat{\cS}_{\cG,\epsilon}}$ be an $\epsilon$-cover of $\cS$.
Then
\begin{align}
\label{eqn:dudley}
\rR_\cS(\cH) \leq 4\kappa\epsilon \sqrt{1-\frac{|{\hat{\cS}_{\cG,\epsilon}}|}{n}} + 4\alpha + 12\int_\alpha^\infty \sqrt{\frac{\log N\big(\tau, \cH, L_2(\PP_{\hat{\cS}_{\cG,\epsilon}})\big)}{n}} d\tau.
\end{align}
\end{lemma}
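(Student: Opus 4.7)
The plan is a short computation combining Dudley's entropy integral theorem (Lemma~\ref{lemma:dudley}) with the refined covering-number bound from Theorem~\ref{thm:lipschitz}; once those ingredients are in place there is essentially only bookkeeping to do.

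First, I would apply Dudley's theorem with a free parameter $\alpha'>0$ to write
\[ \rR_\cS(\cH) \le 4\alpha' + 12\int_{\alpha'}^\infty \sqrt{\frac{\log N\big(\tau, \cH, L_2(\PP_\cS)\big)}{n}}\, d\tau. \]
Next, I would invoke Theorem~\ref{thm:lipschitz} at the particular $\epsilon$-sample cover $\hat{\cS}_{\cG,\epsilon}$ provided in the hypothesis (fixing a single feasible pair in the infimum instead of taking it). Writing $\beta := \kappa\epsilon\sqrt{1 - |\hat{\cS}_{\cG,\epsilon}|/n}$ for brevity, this yields the pointwise inequality $N\big(\tau, \cH, L_2(\PP_\cS)\big) \le N\big(\tau - \beta, \cH, L_2(\PP_{\hat{\cS}_{\cG,\epsilon}})\big)$, which I would plug into the integrand. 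The substitution is only useful where the shifted argument is positive, so the next step arranges the lower limit to lie above $\beta$.

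I would then specialize Dudley's bound by taking $\alpha' = \alpha + \beta$, which is legitimate since the bound holds for every positive scalar. Performing the change of variables $\tau' = \tau - \beta$ in the remaining integral converts the lower limit from $\alpha+\beta$ back to $\alpha$ and the integrand to $\sqrt{\log N(\tau',\cH, L_2(\PP_{\hat{\cS}_{\cG,\epsilon}}))/n}$, while the leading term $4\alpha' = 4\alpha + 4\beta$ splits cleanly into the promised $4\alpha$ together with the additive error $4\beta = 4\kappa\epsilon\sqrt{1 - |\hat{\cS}_{\cG,\epsilon}|/n}$. Collecting the resulting pieces reproduces \eqref{eqn:dudley}.

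There is no substantive obstacle here. The heavy lifting---propagating an $\epsilon$-sample cover of $\cS$ into a covering-number bound for $\cH$ paying only the additive radius $\beta$---has already been absorbed into Theorem~\ref{thm:lipschitz}. The only things worth being careful about are (i) invoking that theorem at the specific cover in the hypothesis rather than the infimizer, which of course still yields a valid upper bound; and (ii) choosing $\alpha'$ large enough that the shifted covering-number argument stays positive throughout the region of integration, which is exactly what the choice $\alpha' = \alpha + \beta$ arranges.
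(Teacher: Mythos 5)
Your proposal is correct and follows essentially the same route as the paper: apply Dudley's entropy integral with the shifted parameter $\alpha' = \alpha + \kappa\epsilon\sqrt{1-|\hat{\cS}_{\cG,\epsilon}|/n}$, bound the covering number via Theorem~\ref{thm:lipschitz} at the given cover, and change variables in the integral. The bookkeeping matches the paper's argument step for step.
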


\begin{proof}
Given any $\alpha>0$, let $\alpha'=\alpha + \kappa\epsilon \sqrt{1-\frac{|{\hat{\cS}_{\cG,\epsilon}}|}{n}}$ and $\tau'=\tau + \kappa\epsilon \sqrt{1-\frac{|{\hat{\cS}_{\cG,\epsilon}}|}{n}}$. Plugging \eqref{eqn:thm-lipschitz} into Dudley's entropy integral theorem (Lemma~\ref{lemma:dudley}) yields
\begin{align*}
\rR_\cS(\cH) &\le 4\alpha' + 12\int_{\alpha'}^\infty \sqrt{\frac{\log N\big(\tau', \cH, L_2(\PP_{\cS})\big)}{n}} d\tau' \\
&\le 4\alpha' + 12\int_{\alpha'}^\infty \sqrt{\frac{\log N\big(\tau' - \kappa\epsilon \sqrt{1-\frac{|{\hat{\cS}_{\cG,\epsilon}}|}{n}}, \cH, L_2(\PP_{\hat{\cS}_{\cG,\epsilon}})\big)}{n}} d\tau' \\
&= 4\alpha' + 12\int_{\alpha' - \kappa\epsilon \sqrt{1-\frac{|{\hat{\cS}_{\cG,\epsilon}}|}{n}}}^\infty \sqrt{\frac{\log N\big(\tau, \cH, L_2(\PP_{\hat{\cS}_{\cG,\epsilon}})\big)}{n}} d\tau' \\
&= 4\kappa\epsilon \sqrt{1-\frac{|{\hat{\cS}_{\cG,\epsilon}}|}{n}} + 4\alpha + 12\int_\alpha^\infty \sqrt{\frac{\log N\big(\tau, \cH, L_2(\PP_{\hat{\cS}_{\cG,\epsilon}})\big)}{n}} d\tau.
\end{align*}
\end{proof}

\begin{proof}[Proof of Proposition~\ref{prop:upper_bound}]
Let $\cH$ be an invariant model class mapping from $\cX$ to $[-B, B]$ for some $B>0$.
Let $\hat{\cS}_{\cG,0}$ be a sample cover such that $|\hat{\cS}_{\cG,0}|=N(0,\cS,\rho_\cG)=m$.

We construct a $\tau$-cover of $\cH$ with respect to $L_2(\PP_{\hat{\cS}_{\cG,0}})$ as follows:
for every $\bx\in\hat{\cS}_{\cG,0}$, we cover the output range of $\cH$ at $\bx$ by a set of points 
$$\cY = \{-B + (2k-1)\tau\}_{k=1}^{\lceil{\frac{B}{\tau}}\rceil}.$$
Let $\hat{\cH}=\{ \hat{h}: \text{dom}(h)=\hat{\cS}_{\cG,0}, \hat{h}(\bx)\in\cY, \forall \bx\in\hat{\cS}_{\cG,0}\}$.
To see that $\hat{\cH}$ is indeed a $\tau$-cover of $\cH$ with respect to $L_2(\PP_{\hat{\cS}_{\cG,0}})$, given any $h\in\cH$, we choose $\hat{h}\in\hat{\cH}$ such that $|h(\bx)-\hat{h}(\bx)|\le \tau$ for every $\bx\in\hat{\cS}_{\cG,0}$ and thus
\begin{align*}
\|h-\hat{h}\|_{L_2(\PP_{\hat{\cS}_{\cG,0}})} &= \left(\sum_{\bx\in{\hat{\cS}_{\cG,0}}} \frac{p(\bx)}{n} \big(h(\bx)-\hat{h}(\bx)\big)^2\right)^\frac{1}{2} \\
&\le \left(\frac{\sum_{\bx\in{\hat{\cS}_{\cG,0}}}p(\bx)}{n} \tau^2\right)^\frac{1}{2} \\
&\le \tau
\end{align*}
Therefore, for $\tau<B$, the covering number of $\cH$ satisfy
$$ N\big(\tau, \cH, L_2(\PP_{\hat{\cS}_{\cG,0}})\big) \le \lceil{\frac{B}{\tau}}\rceil ^m \le \left(\frac{B}{\tau} + 1\right)^m \le \left(\frac{2B}{\tau}\right)^m, $$
whereas for $\tau>B$, we have $N\big(\tau, \cH, L_2(\PP_{\hat{\cS}_{\cG,0}})\big) \le \lceil{\frac{B}{\tau}}\rceil^m \le 1$.

Note that Proposition~\ref{prop:lipschitz-refined-Ramemacher} holds for any model class if we set $\epsilon=0$. Plugging $N\big(\tau, \cH, L_2(\PP_{\hat{\cS}_{\cG,0}})\big)$ into Proposition~\ref{prop:lipschitz-refined-Ramemacher} and setting $\epsilon=0$, $\alpha=0$, we have
\begin{align*}
\rR_\cS(\cH) &\le 12\int_0^\infty \sqrt{\frac{\log N\big(\tau, \cH, L_2(\PP_{\hat{\cS}_{\cG,0}})\big)}{n}} d\tau \\
&\le 12\int_0^B \sqrt{\frac{m \log \left(\frac{2B}{\tau}\right)}{n}} d\tau \\
&= 24B\sqrt{\frac{m}{n}} \int_0^\frac{1}{2} \sqrt{\log \left(\frac{1}{t}\right)} dt \\
&\le 24B\sqrt{\frac{m}{n}} \int_0^1 \sqrt{\log \left(\frac{1}{t}\right)} dt \\
&= 24B\sqrt{\frac{m}{n}} \cdot \frac{\sqrt{\pi}}{2} \\
&\le 24B\sqrt{\frac{m}{n}}
\end{align*}
\end{proof}

\subsection{Binary Coding Constructions of Data Transformations in Proposition~\ref{prop:uniform-bound}}\label{sec:app:dt-construction}
In Proposition~\ref{prop:uniform-bound}, given $K$ sets of group-structured data transformations $\{\cG^{(1)}, \cG^{(2)},... , \cG^{(K)}\}$, we provide a uniform bound for any $h$ in model class and any set of data transformations. 
Here, we extend it to any set of combinatorial data transformations. Given a pool of $L$ types of group-structured data transformations $\{\cG^{(1)}, \cG^{(2)},..., \cG^{(L)}\}$ (e.g., rotation, flipping), we construct the combinatorial data transformations selection $\cG_k$ indexed by $k$ as follows:
fix an arbitrary order of the power set of $[L]$ and denote the $k$-th element as $\cI_k$.
For any $k\in[2^L]$, let $\cG_k$ be the direct product of the data transformations selected by $\cI_k$: $\cG_k = \Pi_{i \in \cI_k} \cG^{(i)}$. Note that $\cG_k$ is also group-structured since the direct product preserves the group structure. Proposition~\ref{prop:uniform-bound} also applies to these combinatorial data transformations $\{\cG_k\}_{k=1}^{2^L}$.

\section{Refined Complexity Analysis for Linear Models}\label{subsec:linear}

This subsection shows a more interpretable generalization benefit of model invariance by considering linear model class and linear data transformations (e.g., rotation).
The following theorem provides a refined model complexity result for the invariant Linear model class.

\begin{theorem} [Refined Rademacher complexity of $\Ab$-invariant $\cH^{\mathsf{Linear}}$]
\label{thm:linear}
Let $\cS=\{\bx_i\}_{i=1}^n$ be a sample of size $n$.
Let $\Ab$ be the matrix representation of any linear data transformation. Consider the $L_p$-norm-bounded linear model class $\cH=\{\bx\mapsto\langle\bw,\bx\rangle:\bw\in\RR^d, \|\bw\|_p \leq W\}$ for some $p\geq1$ and constant $W>0$. Let $\cH^{\mathsf{Linear}}=\{h\in\cH: h(\bx)=h(\Ab\bx), \forall{\bx\in\RR^d}\}$ be the subset of $\cH$ that is invariant under transformation $\Ab$. Then
\begin{align}
\label{eq:linear}
\rR_\cS(\cH^{\mathsf{Linear}}) =& \frac{W}{n} E_\sigma\left[\inf_{\bm{\eta}\in\RR^d}\left\|\bu_\sigma + (\Ab-\Ib)\bm{\eta}\right\|_q\right],
\end{align}
where $\bu_\sigma=\sum_{i=1}^n\sigma_i\bx_i$ and $\{\sigma_1, \ldots, \sigma_n\}$ are i.i.d. Rademacher random variables.
\end{theorem}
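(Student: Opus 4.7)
The plan is to reduce the definition of empirical Rademacher complexity to a constrained linear optimization over the weight vector, translate the invariance condition into a linear constraint on that weight, and then invoke convex duality to convert the constrained supremum over $\bw$ into an unconstrained infimum over a Lagrange multiplier $\bm{\eta}$, which is exactly the form appearing in \eqref{eq:linear}.

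First I would expand $\rR_\cS(\cH^{\mathsf{Linear}}) = \frac{1}{n}\EE_\sigma\bigl[\sup_{h\in\cH^{\mathsf{Linear}}}\sum_i\sigma_i\langle\bw,\bx_i\rangle\bigr] = \frac{1}{n}\EE_\sigma\bigl[\sup_{\bw}\langle\bw,\bu_\sigma\rangle\bigr]$, where the supremum is subject to $\|\bw\|_p\le W$ and the invariance constraint. The invariance condition $\langle\bw,\bx\rangle=\langle\bw,\Ab\bx\rangle$ for all $\bx\in\RR^d$ is equivalent to $(\Ab^\top-\Ib)\bw=\mathbf{0}$, i.e., $\bw\in\ker(\Ab^\top-\Ib)$. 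Thus for each realization of $\bsigma$ we have the convex program $p^\star(\bsigma) = \sup\{\langle\bw,\bu_\sigma\rangle : \|\bw\|_p\le W,\ (\Ab^\top-\Ib)\bw=\mathbf{0}\}$.

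Next I would write the Lagrangian for the equality constraint: since $\langle\bm{\eta},(\Ab^\top-\Ib)\bw\rangle=\langle(\Ab-\Ib)\bm{\eta},\bw\rangle$, we get
\begin{equation*}
p^\star(\bsigma) = \sup_{\|\bw\|_p\le W}\inf_{\bm{\eta}\in\RR^d}\bigl\langle\bw,\ \bu_\sigma-(\Ab-\Ib)\bm{\eta}\bigr\rangle,
\end{equation*}
where the inner infimum equals $\langle\bw,\bu_\sigma\rangle$ on the feasible set and $-\infty$ off of it. The objective is bilinear in $(\bw,\bm{\eta})$, the $\bw$-domain is convex and compact, and Slater's condition holds trivially at $\bw=\mathbf{0}$; thus Sion's minimax theorem (or standard strong duality for convex programs with affine equality constraints) lets me exchange the order:
\begin{equation*}
p^\star(\bsigma) = \inf_{\bm{\eta}}\sup_{\|\bw\|_p\le W}\bigl\langle\bw,\ \bu_\sigma-(\Ab-\Ib)\bm{\eta}\bigr\rangle.
\end{equation*}

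By Hölder's inequality the inner supremum equals $W\|\bu_\sigma-(\Ab-\Ib)\bm{\eta}\|_q$ and is attained, so $p^\star(\bsigma)=W\inf_{\bm{\eta}}\|\bu_\sigma-(\Ab-\Ib)\bm{\eta}\|_q$; replacing $\bm{\eta}$ by $-\bm{\eta}$ in the infimum turns the sign and yields $p^\star(\bsigma)=W\inf_{\bm{\eta}}\|\bu_\sigma+(\Ab-\Ib)\bm{\eta}\|_q$. Dividing by $n$ and taking expectation over $\bsigma$ gives \eqref{eq:linear}. The main obstacle is rigorously justifying the minimax exchange, since $\bm{\eta}$ ranges over all of $\RR^d$ (noncompact); I would either cite Sion's theorem in the form that requires compactness of only one side together with the fact that the optimal dual variable may be restricted to a bounded set (truncating to a large ball is harmless because any $\bm{\eta}$ whose image $(\Ab-\Ib)\bm{\eta}$ pushes $\bu_\sigma+(\Ab-\Ib)\bm{\eta}$ outside the ball of radius $\|\bu_\sigma\|_q$ is suboptimal), or appeal directly to strong duality for linearly constrained convex programs with a Slater point. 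Once that swap is justified, everything else is a routine Hölder computation.
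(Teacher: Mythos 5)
Your proposal is correct and follows essentially the same route as the paper's proof: translate invariance into the linear constraint $(\Ab^\top-\Ib)\bw=\mathbf{0}$, encode it as an inner infimum over a multiplier $\bm{\eta}$, swap the sup and inf by a minimax/duality argument (the paper cites the von Neumann--Fan minimax theorem, exploiting compactness of the $\bw$-ball and bilinearity, which addresses the noncompactness of the $\bm{\eta}$-domain that you flag), and finish with the dual-norm/H\"older computation. The sign difference in your Lagrangian is immaterial since $\bm{\eta}$ ranges over all of $\RR^d$.
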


\begin{proof}
The linearity of the model class $\cH$ allows us to translate the model invariance to an explicit model class constraint and then precisely compute the Rademacher complexity.

To see that the model invariance, $\langle \bw, \bx\rangle = \langle \bw, \Ab\bx\rangle$ for all $\bx\in\RR^d$, is equivalent to an explicit model class constraint $\bw=\Ab^T\bw$, we can choose $\bx$ to be elements in the standard basis of $\RR^d$ and conclude that corresponding entries in $\bw$ and $\Ab^T\bw$ are equal.

Then we precisely compute the Rademacher complexity of $\cH$. Let $\bu_\sigma=\sum_{i=1}^n\sigma_i\bx_i$, we have
\begin{align*}
\rR_S(\cH') =& \E_\sigma\left[\sup_{\substack{\|\bw\|_p\leq W\\(\Ab^T-\Ib)\bw=\textbf{0}}} \frac{1}{n}\sum_{i=1}^n\sigma_i\langle\bw,\bx_i\rangle\right]\\
=& \frac{1}{n} \E_\sigma\left[\sup_{\substack{\|\bw\|_p\leq W\\(\Ab^T-\Ib)\bw=\textbf{0}}} \langle\bw,\bu_\sigma\rangle\right]\\
=& \frac{1}{n} \E_\sigma\left[\sup_{\|\bw\|_p\leq W}\inf_{\bm{\eta}\in\RR^d} \langle\bw,\bu_\sigma\rangle + \langle\bw, (\Ab-\Ib)\bm{\eta}\rangle \right]\\
=& \frac{1}{n} \E_\sigma\left[\inf_{\bm{\eta}\in\RR^d}\sup_{\|\bw\|_p\leq W} \langle\bw,\bu_\sigma + (\Ab-\Ib)\bm{\eta}\rangle \right] \tag{$\star$}\\
=& \frac{W}{n} E_\sigma\left[\inf_{\bm{\eta}\in\RR^d}\left\|\bu_\sigma + (\Ab-\Ib)\bm{\eta}\right\|_q\right], \tag{Dual norm}
\end{align*}
where the equality in ($\star$) holds by the von Neumann-Fan minimax theorem, since $\{\bm{\eta}: \bm{\eta}\in\RR^d\}$ is convex, $\{\bw: \|\bw\|_p\leq W\}$ is compact and convex, and $\langle\bw,\bu_\sigma + (\Ab-\Ib)\bm{\eta}\rangle$ is bi-linear in $\bw$ and $\bm{\eta}$.
\end{proof}

\begin{remark}
As a comparison, the Rademacher complexity of the general linear model class $\cH$ is $\rR_\cS(\cH) = \frac{W}{n} E_\sigma\left[\left\|\bu_\sigma\right\|_q\right]$. 
Note that we always have the model complexity gap $\rR_\cS(\cH) - \rR_\cS(\cH^{\mathsf{Linear}}) \ge 0$ in Theorem~\ref{thm:linear} (as one can check by taking $\bm{\eta}=\bm{0}$ in \eqref{eq:linear}) and the gap can also be made strict in many cases.
\end{remark}
The following proposition gives a more interpretable result by further considering the $L_2$-norm-bounded linear model class.

\begin{proposition}[Refined Rademacher complexity of $L_2$-norm-bounded $\Ab$-invariant $\cH^{\mathsf{Linear}}$]
\label{prop:linear-bounded}
Let $\cH^{\mathsf{Linear}}$ be the $L_2$-norm-bounded linear model class that is invariant under transformation $\Ab$ for some constant $W>0$ (i.e., $p=2$ in Theorem~\ref{thm:linear}). Then
\begin{align}
\label{eqn:l2-bounded}
\rR_\cS(\cH^{\mathsf{Linear}}) = \frac{W}{n} E_\sigma\left[\left\| \Pb \bu_\sigma \right\|_2\right],
\end{align}
where $\Pb=\Ib - (\Ab-\Ib)(\Ab-\Ib)^\dagger$ and $(\Ab-\Ib)^\dagger$ is the Moore–Penrose inverse of $\Ab-\Ib$. 
\end{proposition}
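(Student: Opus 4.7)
The plan is to specialize Theorem~\ref{thm:linear} to $p=2$ (so the dual exponent is also $q=2$) and then reduce the remaining infimum to a standard linear least squares problem. Applying Theorem~\ref{thm:linear} directly gives
\begin{equation*}
\rR_\cS(\cH^{\mathsf{Linear}}) = \frac{W}{n}\, \E_\sigma\!\left[\inf_{\bm{\eta}\in\RR^d}\left\|\bu_\sigma + (\Ab-\Ib)\bm{\eta}\right\|_2\right],
\end{equation*}
so it suffices, pointwise in $\sigma$, to identify the inner infimum with $\|\Pb\bu_\sigma\|_2$.

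Next, I would rewrite the minimization as an ordinary least squares problem by substituting $\bm{\xi}=-\bm{\eta}$ and setting $\Mb=\Ab-\Ib$, turning it into $\min_{\bm{\xi}} \|\bu_\sigma - \Mb\bm{\xi}\|_2$. This is the classical problem of finding the best approximation to $\bu_\sigma$ from the column space $\mathrm{col}(\Mb)$; its optimal value is the length of the orthogonal projection of $\bu_\sigma$ onto $\mathrm{col}(\Mb)^\perp$. The key fact I would cite is that $\Mb\Mb^\dagger$ is precisely the orthogonal projector onto $\mathrm{col}(\Mb)$ (a standard Moore--Penrose identity), and therefore $\Pb = \Ib - \Mb\Mb^\dagger$ is the orthogonal projector onto $\mathrm{col}(\Mb)^\perp$.

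With this identification, the infimum equals $\|\Pb\bu_\sigma\|_2$ for every realization of the Rademacher vector $\sigma$, and taking expectations over $\sigma$ yields the claim. The only subtlety worth spelling out is that the infimum is actually attained (since $\mathrm{col}(\Mb)$ is a closed subspace of $\RR^d$), so one can freely exchange ``inf'' for ``min'' and invoke the projection formula without a limiting argument. I do not expect a real obstacle here; the proof is essentially a one-step consequence of Theorem~\ref{thm:linear} combined with the geometry of least squares, and the main thing to be careful about is correctly attributing the projector role to $\Mb\Mb^\dagger$ when $\Mb$ is not of full column rank (which is exactly the case when the invariance constraint is nontrivial).
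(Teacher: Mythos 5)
Your proposal is correct and follows exactly the route the paper takes: the paper's proof is a one-line remark that the result ``follows from the least square solution to Theorem~\ref{thm:linear} (with $p=2$)'', and your argument simply fills in the details, correctly identifying $(\Ab-\Ib)(\Ab-\Ib)^\dagger$ as the orthogonal projector onto $\mathrm{col}(\Ab-\Ib)$ so that the inner infimum equals $\|\Pb\bu_\sigma\|_2$ pointwise in $\sigma$. No gaps.
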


\begin{proof}
Proposition~\ref{prop:linear-bounded} follows from the least square solution to Theorem~\ref{thm:linear} (with $p=2$).
\end{proof}

\begin{remark}
Proposition~\ref{prop:linear-bounded} shows that the improvement in model complexity (and thus the generalization bound) for linear invariant models depends both on the sample and on data transformations.
The matrix $\Pb$ in \eqref{eqn:l2-bounded} is essentially the orthogonal projection matrix that projects the weighted sum of data $\bu_\sigma$ onto the null space of $(\Ab-\Ib)^T$.
Intuitively, the linear data transformation $\Ab$ separates each input $\bx$ into two orthogonal components: $\Pb \bx$ that is $\Ab$-invariant, and $\bx-\Pb \bx$ that is $\Ab$-variant. Linear models that are invariant to $\Ab$ will ignore the $\Ab$-variant component and only capture the $\Ab$-invariant component (otherwise they will not be $\Ab$-invariant).
Suppose that the data distribution has zero mean and bounded variance, then the Rademacher complexity of $\cH^{\mathsf{Linear}}$ is upper-bounded by the variance of the $\Ab$-invariant component in $\bx$. Therefore, if the data transformation captures most of the data variance, the corresponding invariant models will have much smaller model complexity and thus better generalization performance.
We give some examples in Example~\ref{example:linear}.
\end{remark}

\begin{example}
\label{example:linear}
Suppose the data $\bx\in\RR^d$ have Gaussian distribution $\cN(0, \sigma^2\Ib)$. Let $\cH$ be the $L_2$-norm-bounded linear model class. Then we have the following Rademacher complexity \cite{mohri2018foundations} bounds:

(a) $\rR_n(\cH) \leq \sqrt{d}\cdot\frac{W\sigma}{\sqrt{n}}$ for the general $\cH$; 

(b) $\rR_n(\cH') \leq \sqrt{\lceil{\frac{d}{2}}\rceil}\cdot\frac{W\sigma}{2\sqrt{n}}$ for the flipping-invariant $\cH'\subseteq\cH$; 

(c) $\rR_n(\cH'') \leq 1\cdot\frac{W\sigma}{n}$ for the circular-translation-invariant $\cH''\subseteq\cH$. The fast convergence rate of $O(\frac{1}{n})$ guarantees a small generalization gap.
\end{example}

\section{Empirical Estimation of Sample Covering Numbers} \label{app:estimate_sample_cover_num}
Detailed steps to estimate sample covering numbers are as follows.

\shortsection{Step 1}
Compute (or approximate) the direct orbit distance between any two examples in $\cS$.
The direct orbit distance between any two examples $\bx_i, \bx_j \in \cS$ is
\begin{align*}
    d_{\cG}(\bx_i, \bx_j) = \|{\cG}(\bx_i) - {\cG}(\bx_j)\| = \min_{g_1, g_2 \in \cG} \|g_1(\bx_i)-g_2(\bx_j)\|.
\end{align*}

\shortsection{Step 2}
Compute the $\rho_\cG$ distance between any two examples in $\cS$.
Given results in step 1, Computing the $\rho_\cG$ distance between any two examples can be formulated as a shortest path problem on a complete graph, where each node represents an example and the cost of each edge is the direct orbit distance computed in step 1. The shortest path problem is as follows.

\begin{align*}
    \rho_\cG(\bx_s, \bx_t) = \min & \sum_{(i,j)\in [|S|]}  d_{\cG}(\bx_i, \bx_j)z_{ij} \\
    \text{s.t.} & \sum_{j\in \delta^+(i)}z_{ij} - \sum_{j\in \delta^-(i)}z_{ji} = 
    \begin{cases}
      1, & \text{if}\ i=s \\
      -1, & \text{if}\ i=t \\
      0, & \text{o.w.}
    \end{cases}
    ,\quad \forall i\in[|S|] \\
     & \sum_{j\in \delta^+(i)}z_{ij} \leq 1  ,\quad \forall i\in[|S|] \\
    & z_{ij} \in \{0,1\}  ,\quad \forall i,j\in[|S|]
\end{align*}

where $z_{ij}$ is the binary variable indicating whether the path from ${\cG}(\bx_i)$ to ${\cG}(\bx_j)$ belongs to the shortest path, and $\delta^+(i)$, $\delta^-(i)$ are the sets of indices of outgoing and incoming nodes. 
For each pair of examples, this problem can be solved by shortest path algorithms (e.g., Dijkstra's algorithm) in polynomial time (e.g., $O(n^3$)).

\shortsection{Step 3}
Construct the pairwise distance matrix $\bm{\mathsf{D}}\gets [\rho_\cG(\bx_i, \bx_j)]_{i,j}$ and approximate the sample covering number.
In experiments, we use modified k-medoids \cite{park2009simple} clustering method to find the approximation of $N(\epsilon, \cS, \rho_\cG)$. Since the k-medoids algorithm requires the number of clusters as an input, we can assign one heuristically or greedy search it as in Algorithm \ref{alg:cluster}.   
\begin{algorithm}
\caption{\texttt{Distance2SampleCoveringNum}: sample covering number approximation based on pairwise distances
}\label{alg:cluster}
{\bf Input:} distance matrix $\bm{\mathsf{D}}\in \RR^{|\cS|\times|\cS|}$, resolution $\epsilon$

{\bf Output:} $\widehat{N}(\epsilon, \cS, \rho_\cG)$, an empirical estimation of sample covering number $N(\epsilon, \cS, \rho_\cG)$

{\bf Algorithm:}
\begin{algorithmic}[l]
\STATE Set $k=|\cS|$
\STATE Set $scn=|\cS|$
\WHILE {$k>0$} 
\STATE $N=k$
\STATE clusters = KMedoids($\bm{\mathsf{D}}, k$) \quad \# split $\cS$ into $k$ clusters according to $\bm{\mathsf{D}}$
\FOR{every cluster}
\FOR{every point}
\IF{$\bm{\mathsf{D}}(\text{point, center})>\epsilon$}
\STATE $N = N+1$
\ENDIF
\ENDFOR
\ENDFOR
\STATE $scn = \min\{N, scn\}$
\STATE $k = k-1$
\ENDWHILE
\RETURN{$scn$}
\end{algorithmic}
\end{algorithm}

\section{Experimental Details and Extended Experiments} \label{app:exp}
\subsection{Datasets}
We perform our empirical analysis on CIFAR-10, ShapeNet in Section \ref{sec:experiment} and on CIFAR-100 as well as Restricted ImageNet in Appendix \ref{app:exp_CIFAR-100}.

CIFAR-10 dataset \cite{krizhevsky2009learning} consists of 60000 32x32 color images in 10 classes, with 6000 images per class. There are 50000 training images and 10000 test images. The categories in CIFAR-10 are: \textit{airplane, automobile, bird, cat, deer, dog, frog, horse, ship, truck}.

ShapeNet\footnote{https://shapenet.org/} \cite{chang2015shapenet} is a large-scale 3D model repository. In our experiments, we use a subset of it that contains 10 classes and we resize every image to 32x32. 
There are 30834 training images and 7709 test images.
The categories in this dataset are \textit{sofa, cabinet, chair, display, loudspeaker, lamp, airplane, table, car, watercraft}. 
3D-view transformations could be done by 3D object reconstruction methods, e.g., R2N2 \cite{r2n2}, or rendering tools, e.g., PyTorch3D\footnote{https://PyTorch3D.org/}. 
We use pre-rendered images provided by R2N2\footnote{http://3d-r2n2.stanford.edu/} to approximate the random perturbations of 3D-view.

CIFAR-100 \cite{krizhevsky2009learning} consists of 60000 32x32 color images in 100 classes, with 600 images per class. There are 500 training images and 100 testing images per class.

Restricted ImageNet \cite{tsipras2018robustness} is a subset of ImageNet. It has 8 classes, and each of which is made by grouping a subset of existing, semantically similar ImageNet classes into a super-class. All images are preprocessed into a 64x64 resolution.

\subsection{Data Transformations}
In this paper, we consider \textit{flipping, cropping, rotation} and \textit{3D-view} as data transformations in Section \ref{sec:experiment}. We apply them respectively on one image from the ShapeNet dataset and illustrate the original and transformed images in Figure \ref{fig:illustration_transformation}. 
For flipping, we only consider horizontal flipping. 
For cropping, there are two hyper-parameters, the padding number, and the cropping size, that determine a random cropping operation. An image is first padded with the last value at the edge, and then randomly cropped to a certain size. 
For rotation, we only consider rotating an image around its center. There is one hyper-parameter, degree, that determines a rotation operation. 
For 3D-view transformations, there are three hyper-parameters, distance, elevation, and azimuth, that together determine a specific 3D view. We can interpret the 3D view as a specific position of the camera which is determined by the distance away from the target point, the elevation angle, and the azimuth angle. As long as the camera's position is determined, we would have the 2D image rendered from that specific viewpoint via R2N2 or PyTorch3D. We also evaluate \textit{cutout} and \textit{color jitter} in Appendix \ref{app:norm_scn}. Cutout \cite{cutout} is a data augmentation method that randomly removes contiguous sections of input images. There are three hyper-parameters that control the size, ratio, and pixel values of the rectangle that mask the images. Color jitter is a type of image data transformation where we randomly change the brightness, contrast, and saturation of an image which can be controlled by three hyper-parameters.

\begin{figure}
    \centering
    \includegraphics[width=0.8\textwidth]{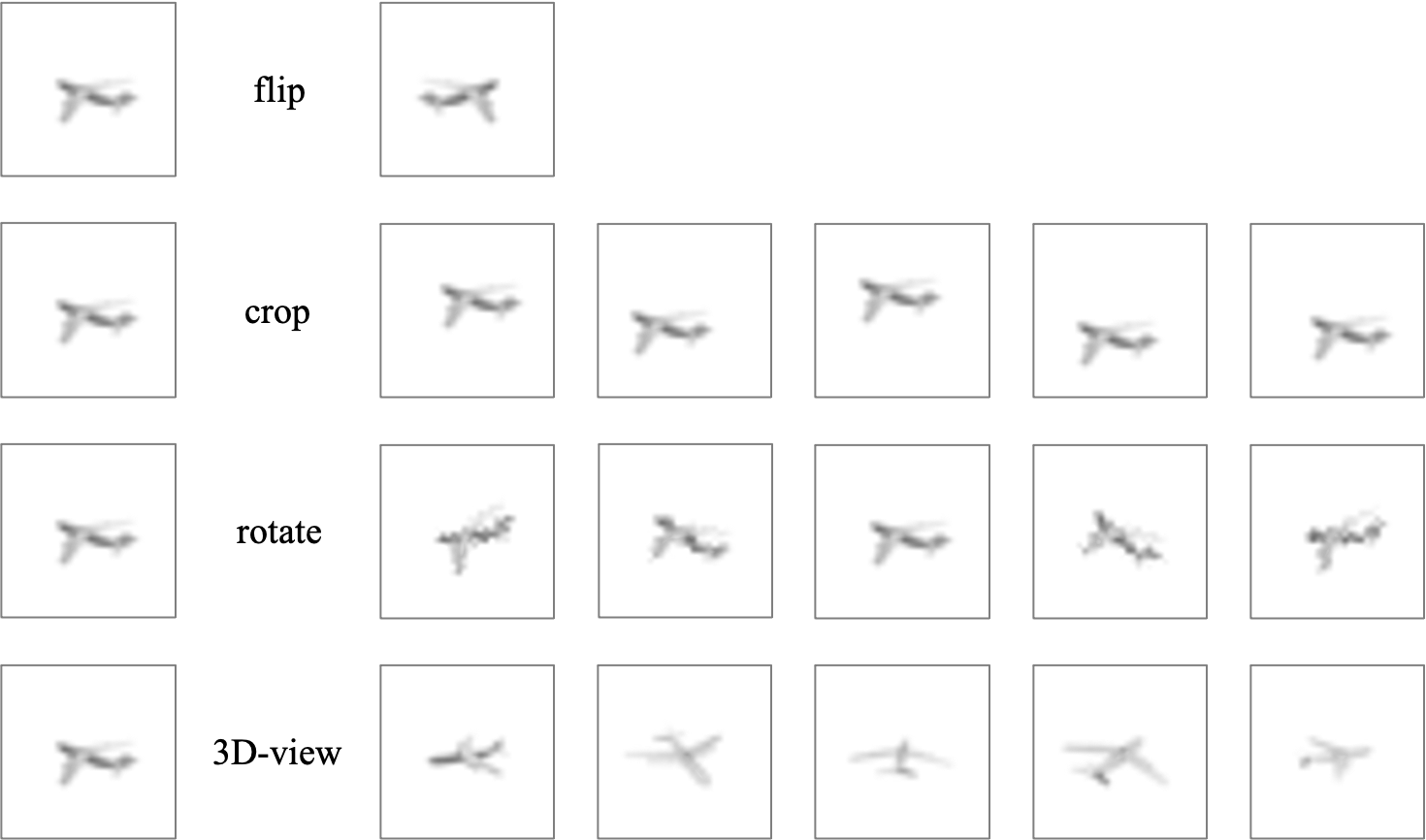}
    \caption{An illustration of data transformations}
    \label{fig:illustration_transformation}
\end{figure}

\subsection{Details on Estimating Sample Covering Numbers}
In this paper, we estimate the sample covering numbers induced by different transformations on CIFAR-10, ShapeNet, CIFAR-100, and Restricted ImageNet. 
Table~\ref{tab:trans_setting} provides the hyper-parameter settings 
that we use for data transformations in this paper. These settings are typically used to preserve labels after data transformations in object classification tasks. 
Continuous data transformations, such as rotation, cropping, and 3D-view, contain infinite numbers of elements in the transformation set. To approximate the orbit, we do sampling every 1 degree for rotation and random sampling (50 times) for cropping, cutout, and color jitter. We use the set of 24 random multi-view images rendered by the R2N2 method to approximate the orbit induced by 3D-view transformations. 

\begin{table}
    \centering
    \begin{tabular}{ll}
    \toprule
         Transformation & Hyper-parameters \\
    \midrule
         Flip & horizontal flip \\
         Rotate & degree $\in [-30,30]$ \\
         Crop & padding $= 4$, cropping size $=32\text{x}32$ \\
         3D-view & distance $\in[0.65,1]$, elevation $ \in [25,30]$, azimuth $\in [0, 360]$ \\
         Cutout & value=0.5, scale=0.05, ratio=1 \\
         ColorJitter & brightness$ \in [0.75,1.25]$, contrast$ \in [0.75,1.25]$, saturation$ \in [0.75,1.25]$ \\
    \bottomrule
    \end{tabular}
    \caption{Data transformations used in our experiments.}
    \label{tab:trans_setting}
\end{table}

\subsection{Details on Evaluating Generalization Benefit}
In Section \ref{exp:benefit}, we evaluate the generalization benefit of learning model invariance to different data transformations. 
We consider the object classification task and use the ResNet18 model architecture on both datasets. 
To learn the invariant models, we use two methods: data augmentation and regularization. 
In the test phase, we evaluate models on clean test sets without applying any data transformations.

\shortsection{Data augmentation method}  
The training loss for the data augmentation method is $\cL_{aug} (\bx) = \cL(f(g(\bx)))$, where $f(\cdot)$ denotes the model and $g(\bx)$ denotes a randomly sampled example in $\bx$'s orbit induced by transformation $\cG$. We use the cross-entropy loss function for $\cL$. 
In each epoch, we randomly sample transformed images as input and preserve ground truth labels.  
We use SGD optimizer with an initial learning rate of 0.01 and decay the learning rate by 0.1 every 50 epochs. 
We train each model for 110 epochs and select the best model according to test accuracy. We run independent experiments four times and report the results in Table \ref{tab:cifar} and \ref{tab:shapenet}. 

\shortsection{Regularization method}
The training loss for regularization method is $\cL_{reg}=\cL_{cls}+\cL_{inv}=\cL(f(\bx))+\lambda\text{KL}(f(\bx), f(g(\bx)))$.
Specifically, in addition to minimizing the classification loss on the original image, we also regularize the model by minimizing the KL divergence between the model's logit outputs on the original image and on the transformed one.
The loss function and optimization settings are the same as those in the data augmentation method except for the case when $\lambda=100$. We use a learning rate of 0.001 without weight decay and train the model for 500 epochs in that experiment. 
At test time, we use two metrics to evaluate the model invariance under 3D-view transformations. The first one is the invairance loss, namely $\cL_{inv} (\bx) =\EE_{g_1,g_2\in\cG} [\text{KL}(f(g_1(\bx)), f(g_2(\bx)))]$.
We approximate the expectation by averaging the KL divergence over the 24 pre-rendered random multi-view images for each original image.
The second metric is $\cA_{inv}$, namely the consistency accuracy under the worst-case transformation. We have $\cA_{inv}(\bx)=1$ if model's outputs on data points in $\bx$'s orbit are consistent, and $\cA_{inv}(\bx)=0$ otherwise. We also use the 24 pre-rendered multi-view images of $\bx$ to approximate its orbit. 
We run independent experiments four times and report the results in Table \ref{tab:lambda}. 

\subsection{Extended Experiments} 
\subsubsection{Experiments on Additional Datasets}\label{app:exp_CIFAR-100}
To better show the consistency between our theory and practice, we conduct additional experiments on CIFAR-100 \cite{krizhevsky2009learning} and Restricted ImageNet \cite{tsipras2018robustness}. We randomly sample 1000 examples in the training set to evaluate the sample covering numbers induced by different data transformations. The settings of data transformations are the same as that in Table \ref{tab:trans_setting}. We train a ResNet18 with different data augmentations three times and report results in Table \ref{tab:CIFAR-100} and \ref{tab:restricted}. The results on CIFAR-100 and Restricted ImageNet both support that a small sample covering number correlates with a small generalization gap.

\begin{table}
\centering
\begin{tabular}{lccccc}
\toprule
            & \multicolumn{3}{c}{Sample covering number} & \multicolumn{2}{c}{Generalization}\\ \cmidrule{2-6} 
Model       & $\epsilon=5.7$       & $\epsilon=7.5$  &  $\epsilon=9.4$   & acc (\%)       & gap \\ \midrule
Base        & 1000 & 990 & 950 & $60.06\pm 0.39$ & $39.91\pm0.40$ \\
Flip        & 1000 & 984 & 945 & $66.49\pm0.46$  & $33.48\pm0.45$  \\
Rotate      & 1000 & 976 & 921 & $67.79\pm0.46$  & $32.17\pm0.47$  \\
Crop        & 995 & 965 & 863 & $72.44\pm0.16$  & $27.53\pm0.16$  \\
\bottomrule
\end{tabular}
\caption{Sample covering numbers, classification accuracy, and generalization gap (the difference between training and test accuracy) for ResNet18 on CIFAR-100.}
\label{tab:CIFAR-100}
\end{table}
\begin{table}
\centering
\begin{tabular}{lccccc}
\toprule
            & \multicolumn{3}{c}{Sample covering number} & \multicolumn{2}{c}{Generalization}\\ \cmidrule{2-6} 
Model       & $\epsilon=14.6$       & $\epsilon=18.4$  &  $\epsilon=21.6$   & acc (\%)       & gap \\ \midrule
Base        & 1000 & 990 & 955 & $82.85\pm 0.42$ & $17.14\pm0.42$ \\
Flip        & 999 & 986 & 941 & $88.07\pm0.39$  & $11.92\pm0.39$  \\
Rotate      & 998 & 967 & 883 & $88.61\pm0.16$  & $11.14\pm0.28$  \\
Crop        & 995 & 947 & 793 & $91.38\pm0.26$  & $8.37\pm0.26$  \\
\bottomrule
\end{tabular}
\caption{Sample covering numbers, classification accuracy, and generalization gap (the difference between training and test accuracy) for ResNet18 on Restricted ImageNet.}
\label{tab:restricted}
\end{table}

\subsubsection{Normalization of Sample Covering Numbers} \label{app:norm_scn}
As discussed in Section \ref{sec:select-trans}, the proposed sample covering number is a model-agnostic measure that does not consider the potential Lipschitz constant increase induced by data transformations. For example, darkening all the images leads to a small sample covering number since the values of all images decrease. However, the Lipschitz constant required for the model is increased to classify closer classes. To mitigate this limitation, we can do normalization for sample covering numbers. Intuitively, the minimum inter-class distance among all class pairs gives us a clue for the required Lipschitz constant. Therefore, we use the ratio between the minimum inter-class before and after applying data transformations to normalize sample covering numbers. In Table \ref{tab:norm_scn}, we evaluate 5 types of data transformations including cutout and color jitter. The sample covering number of color jitter is quite small because it shrinks all the values of images. After normalizing with the minimum inter-class distance, it is larger than that of cropping which aligns with the actual generalization benefits. This is a heuristic normalization that takes potential Lipschitz constant change into consideration. It has limitations such as the normalized sample covering number could exceed the base one. We leave a better normalization for future work.
\begin{table}
    \centering
    \resizebox{\columnwidth}{!}{
    \begin{tabular}{lcccccccc}
\toprule
            & \multicolumn{3}{c}{SCN} & \multicolumn{3}{c}{Normalized SCN} & \multicolumn{2}{c}{Generalization}\\ \cmidrule{2-9} 
Model       & $\epsilon=4.9$       & $\epsilon=6.2$  &  $\epsilon=7.6$ & $\epsilon=4.9$       & $\epsilon=6.2$  &  $\epsilon=7.6$   & acc (\%)       & gap \\ \midrule
Base        & 1000 & 992 & 954 & 1000 & 992 & 954 & $85.43\pm 0.35$ & $14.57\pm0.35$ \\
ColorJitter        & 927 & 710 & 372 & 1000 & 994 & 963 & $85.82\pm0.33$  & $14.18\pm0.33$  \\
Cutout        & 999 & 974 & 902 & 1000 & 993 & 963 & $87.24\pm0.23$  & $12.75\pm0.23$  \\
Flip        & 999 & 990 & 946 & 1000 & 995 & 964 & $89.67\pm0.24$  & $10.33\pm0.24$  \\
Rotate      & 999 & 976 & 909 & 1000 & 988 & 939 & $89.91\pm0.13$  & $10.05\pm0.16$  \\
Crop        & 996 & 961 & 863 & 999 & 985 & 909 & $92.52\pm0.08$  & $7.48\pm0.08$  \\
\bottomrule
\end{tabular}
}
    \caption{Sample covering number (SCN) without and with normalization and generalization performance of ResNet18 on CIFAR-10.}
    \label{tab:norm_scn}
\end{table}

\subsubsection{Estimating Sample Covering Numbers with Different Sample Sizes}
In Section \ref{exp:scn}, we estimate the sample covering numbers on randomly chosen subsets of the whole training datasets.
The sample sizes are 1000 for CIFAR-10 and 800 for ShapeNet.
To investigate the impact of sample sizes on estimation, we further estimate the sample covering numbers with different sample sizes on ShapeNet. 
The results, shown in Figure~\ref{fig:diff_n} (a)-(c), show consistent trends and comparisons among different data transformations in all sample size settings. 
Notably, the 3D-view transformation outperforms other types of transformations by a large margin (and indeed yields better generalization benefits as shown in Table~\ref{tab:shapenet}). 
Therefore, for guiding the data transformation selection, these results suggest that it suffices to estimate the sample covering number on a small subset of the whole dataset for efficiency.

In addition, Figure~\ref{fig:diff_n} (d) shows that the normalized sample covering number decreases as the sample size $n$ increases for fixed $\epsilon$.
This result also suggests that we can keep a fixed ratio between the sample covering number and the sample size but gradually shrink the resolution $\epsilon$ as the sample size $n$ grows.
For a sufficiently large sample size, it is possible to use a very small resolution $\epsilon$ to get a sample covering number that is much smaller than the sample size.
\begin{figure}
    \centering
    \subfigure[$n=100$]{\includegraphics[width=0.4\textwidth]{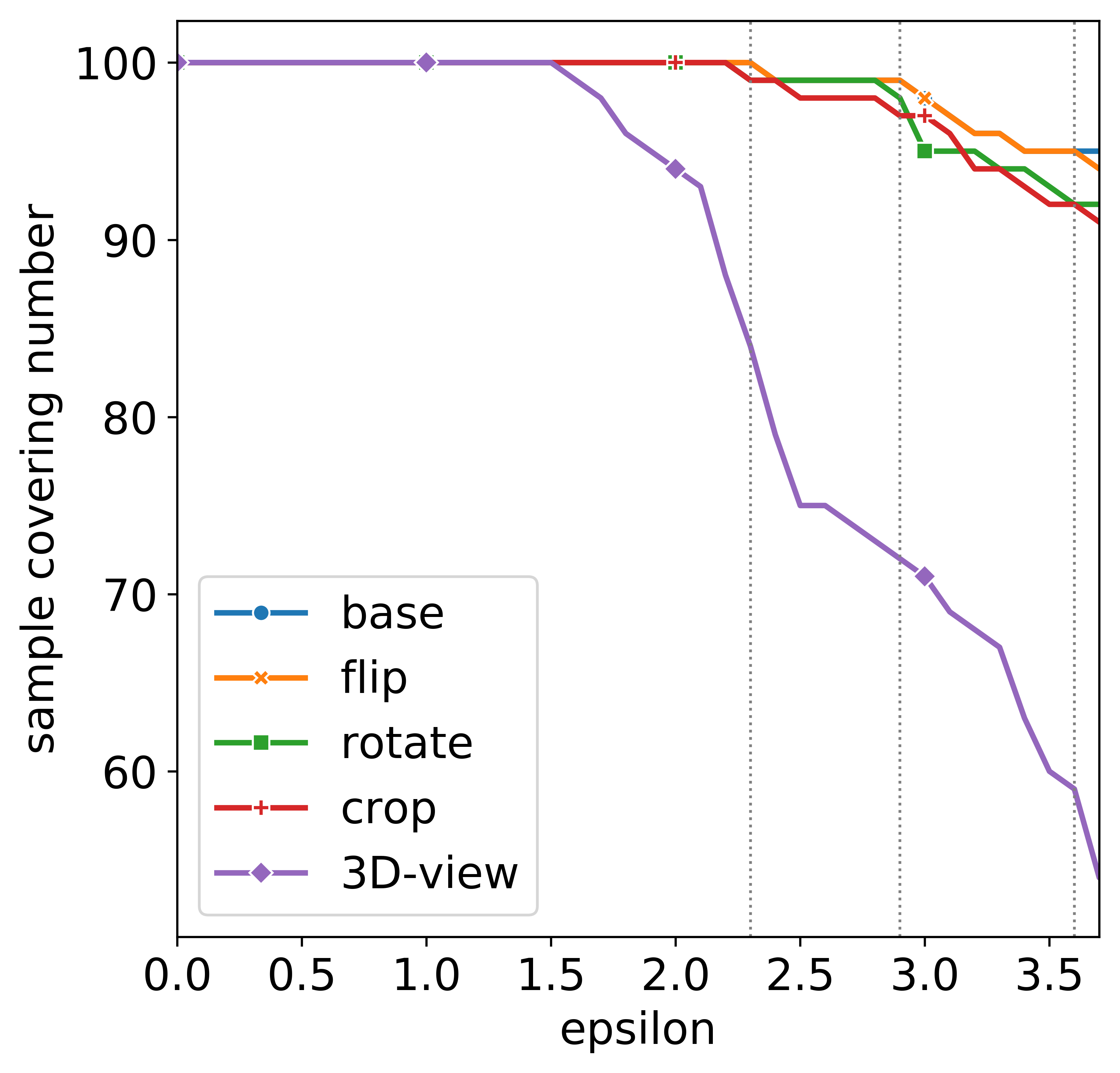}}
    \hfill
    \subfigure[$n=800$]{\includegraphics[width=0.4\textwidth]{fig/scn_r2n2.png}}
    \\
    \subfigure[$n=10000$]{\includegraphics[width=0.4\textwidth]{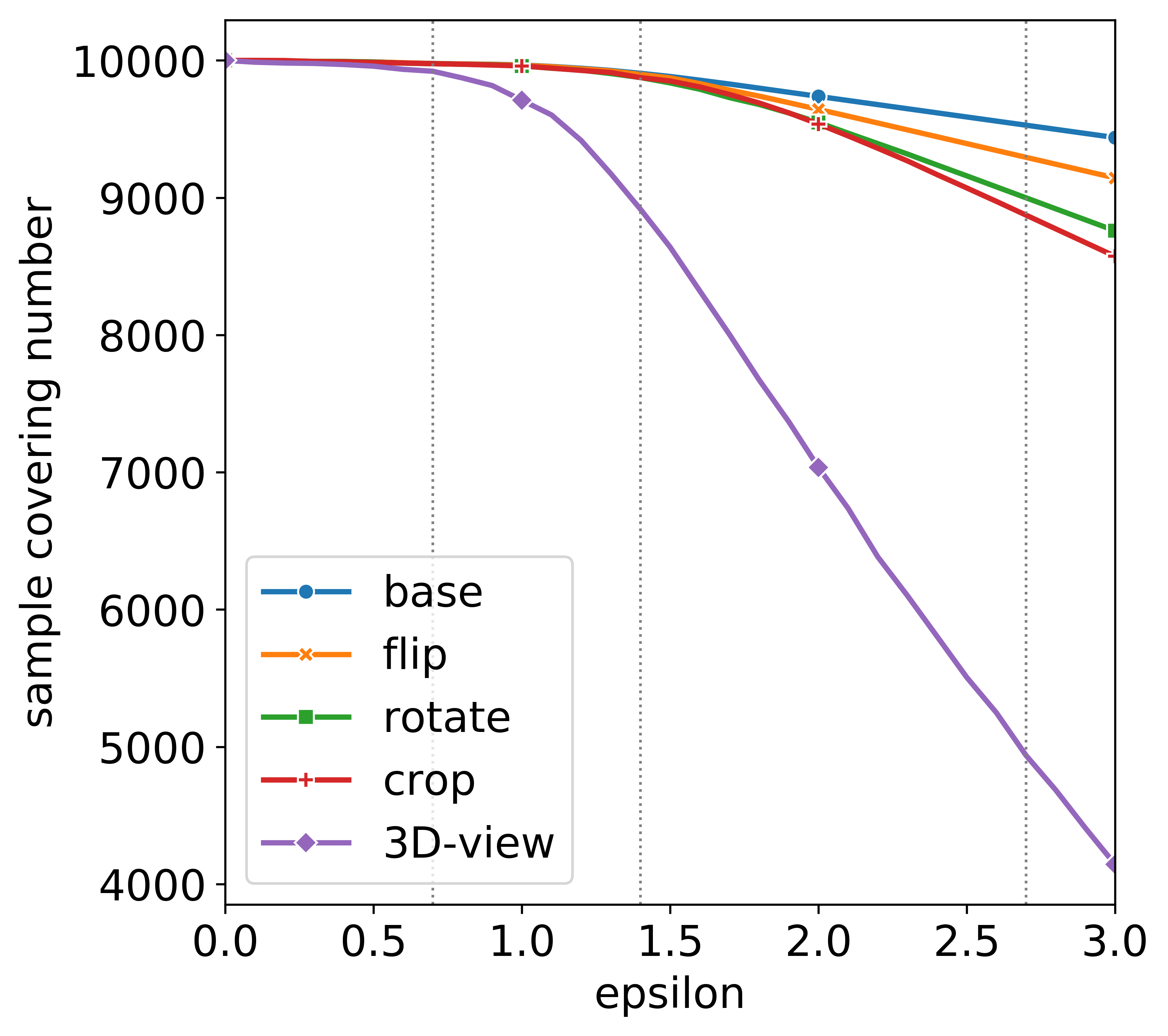}}
    \hfill
    \subfigure[3D-view]{\includegraphics[width=0.4\textwidth]{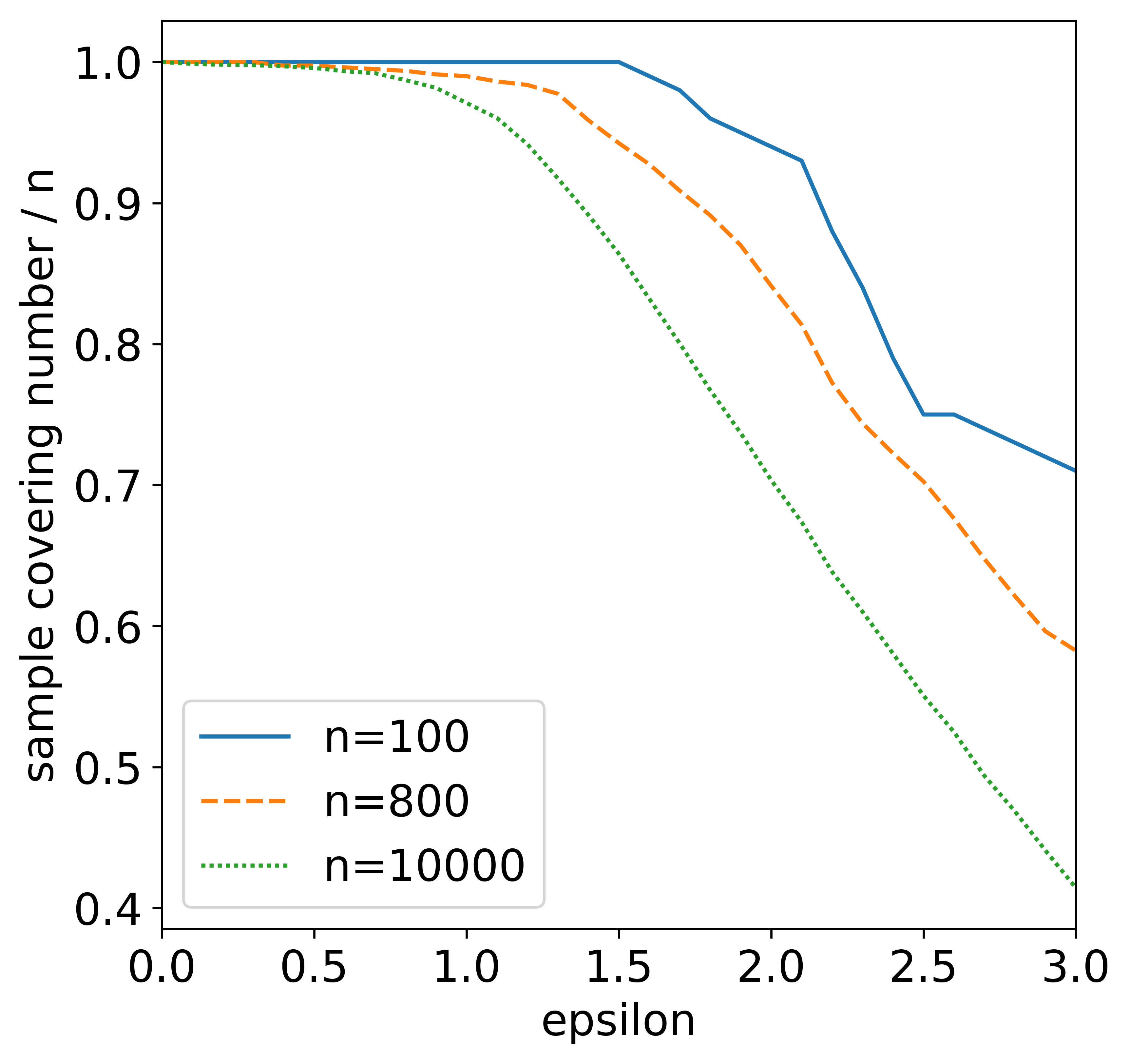}}
    \caption{(a)-(c): Estimated sample covering numbers induced by different data transformations on ShapeNet. $n$ denotes the total sample size. (d): The normalized sample covering number (=sample covering number $/ n$) of 3D-view transformations estimated with different sample sizes. }
    \label{fig:diff_n}
\end{figure}

\subsubsection{Influence of Model Class's Implicit Bias on Generalization Benefit}
\begin{table}
\resizebox{\columnwidth}{!}{
\begin{tabular}{lcccccc}
\toprule
            & \multicolumn{2}{c}{$n=100$} & \multicolumn{2}{c}{$n=1000$} & \multicolumn{2}{c}{$n=all$} \\ \cmidrule{2-7} 
Model       & acc (\%)       & gap        & acc (\%)       & gap         & acc (\%)       & gap        \\ \midrule
Base        & $64.25 \pm 1.87 $ & $20.88\pm2.00$ & $77.50\pm0.48$ & $21.70\pm 0.49$ & $86.67\pm0.37$ & $12.23\pm0.37$ \\
Flip        & $65.00 \pm 2.00$  & $13.84\pm1.90$ & $78.15\pm0.50$ & $16.26\pm0.50$  & $87.22\pm0.32$ & $9.21\pm0.32$ \\
Rotate      & $63.50 \pm 2.14$  & $4.88\pm2.15$ & $76.70\pm0.58$ & $8.98\pm0.55$  & $87.00\pm0.34$ & $5.12\pm0.36$ \\
Crop        & $54.56 \pm 1.96$  & $-4.00\pm1.80$ & $69.60\pm0.42$ & $2.20\pm0.42$  & $83.55\pm0.32$ & $1.58\pm0.36$ \\
3D-View     & $64.75 \pm 1.88$  & $2.25\pm1.88$ & $79.20\pm0.45$ & $3.18\pm0.43$  & $88.28\pm0.28$ & $2.00\pm0.30$ \\ 
\bottomrule
\end{tabular}
}
\caption{Classification accuracy and generalization gap (the difference between training and test accuracy) for MLP on ShapeNet. $n$ denotes the sample size per class. }
\label{tab:mlp}
\end{table}

Our proposed sample covering number is a model-agnostic metric to measure the potential generalization benefit of being invariant to certain data transformations. 
Thus, a natural question is: do all models enjoy the same benefit? 
Different from the ResNet architecture which contains a lot of human priors and engineering work, the 2-layer MLP is among the simplest neural network architectures that better eliminate the influence of architecture's implicit bias.
We use the 2-layer MLP which contains 2 hidden layers, each of which has 10000 hidden units. 
We use ReLU activation for the two hidden layers and do not use common techniques such as batch normalization or dropout.
We use the data augmentation method to train the invariance for the model. 
The loss function and optimization settings are the same as that used in ResNet18. 
We run independent experiments four times and report the results in Table \ref{tab:mlp}.

The decreased generalization gaps shown in Table \ref{tab:mlp} suggest that MLP also benefits from being invariant to data transformations. 
Moreover, comparisons of the generalization gaps between different transformations are similar to those on ResNet18, indicating the effectiveness and applicability of our proposed metric.
Despite the reduced generalization gap, however, MLPs trained with invariance suffer from decreased test accuracy in some cases, especially for cropping. 
This may be due to the limited model capacity of the 2-layer MLP learned by SGD.
In summary, our proposed sample covering number shows empirical effectiveness in predicting the generalization benefit in a model-agnostic way.
Based on our results, we advocate for data transformations that have small sample covering numbers (e.g., 3D-view transformation) and suggest learning the invariance under those data transformations for better generalization performance.

\end{document}